\newif\ifisarxiv
\def\opt{{\textsc{OPT}_k}}
\def\cmark{\Green{\checkmark}}
\def\xmark{\Red{\large\sffamily x}}
\newif\ifDRAFT
\newcommand{\marrow}{\marginpar[\hfill$\longrightarrow$]{$\longleftarrow$}}
\newcommand{\niceremark}[3]
   {\textcolor{red}{\textsc{#1 #2:} \marrow\textsf{#3}}}
\newcommand{\ken}[2][says]{\niceremark{Ken}{#1}{#2}}
\newcommand{\michael}[2][says]{\niceremark{Michael}{#1}{#2}}
\newcommand{\michal}[2][says]{\niceremark{Michal}{#1}{#2}}
\newcommand{\feynman}[2][says]{\niceremark{Feynman}{#1}{#2}}
\newcommand{\ken}[1]{}
\newcommand{\michael}[1]{}
\newcommand{\michal}[1]{}
\newcommand{\feynman}[1]{}
\def\ee{\mathrm{e}}
\def\poly{{\mathrm{poly}}}
\def\DPP{{\mathrm{DPP}}}
\def\DPPcor{{\DPP_{\!\mathrm{cor}}}}
\def\DPPens{{\DPP_{\!\mathrm{ens}}}}
\newcommand{\DPPreg}[1]{{\DPP_{\!\mathrm{reg}}^{#1}}}
\def\checkmark{\tikz\fill[scale=0.4](0,.35) -- (.25,0) --
(1,.7) -- (.25,.15) -- cycle;}
\newcommand{\sqrtshort}[1]{{\sqrt{\white{\Big|}\!\!\smash{\text{\fontsize{9}{9}\selectfont$#1$}}}}}
\newenvironment{proofof}[2]{\par\vspace{2mm}\noindent\textbf{Proof of {#1} {#2}}\ }{\hfill\BlackBox}
\DeclareMathOperator{\adj}{\textnormal{adj}}
\def\Sigmab{\mathbf{\Sigma}}
\def\cb {\mathbf{c}}
\def\Xt{\widetilde{X}}
\def\Dbt{\widetilde{\D}}
\def\L{\mathbf{L}}
\def\Nc{\mathcal{N}}
\def\K{\mathbf K}
\newcommand{\BlackBox}{\rule{1.5ex}{1.5ex}}  
\DeclareMathOperator*{\argmin}{\mathop{\mathrm{argmin}}}
\DeclareMathOperator*{\diag}{\mathop{\mathrm{diag}}}
\def\x{\mathbf x}
\def\y{\mathbf y}
\def\w{\mathbf w}
\def\zero{\mathbf 0}
\def\X{\mathbf X}
\def\B{\mathbf B}
\def\A{\mathbf A}
\def\D{\mathbf D}
\def\M{\mathbf M}
\def\Z{\mathbf Z}
\def\I{\mathbf I}
\def\Ic{\mathcal I}
\def\A{\mathbf A}
\def\Xt{\widetilde{\mathbf{X}}}
\def\E{\mathbb E}
\def\R{\mathbb R} 
\def\Pr{\mathrm{Pr}} 
\def\tr{\mathrm{tr}}
\let\origtop\top
\renewcommand\top{{\scriptscriptstyle{\origtop}}} 
\definecolor{silver}{cmyk}{0,0,0,0.3}
\definecolor{yellow}{cmyk}{0,0,0.9,0.0}
\definecolor{reddishyellow}{cmyk}{0,0.22,1.0,0.0}
\definecolor{black}{cmyk}{0,0,0.0,1.0}
\definecolor{darkYellow}{cmyk}{0.2,0.4,1.0,0}
\definecolor{orange}{cmyk}{0.0,0.7,0.9,0}
\definecolor{darkSilver}{cmyk}{0,0,0,0.1}
\definecolor{grey}{cmyk}{0,0,0,0.5}
\definecolor{darkgreen}{cmyk}{0.6,0,0.8,0}
\newcommand{\Red}[1]{{\color{red}  {#1}}}
\newcommand{\Green}[1]{{\color{darkgreen}  {#1}}}
\newcommand{\white}[1]{{\textcolor{white}{#1}}}
\newenvironment{proof}{\par\noindent{\bf Proof\ }}{\hfill\BlackBox\\[2mm]}
\newtheorem{theorem}{Theorem}
\newtheorem{condition}{Condition}
\newtheorem{lemma}[theorem]{Lemma}
\newtheorem{remark}[theorem]{Remark}
\newtheorem{corollary}[theorem]{Corollary}
\newtheorem{definition}{Definition}
\title{Bayesian experimental design using regularized \\
  determinantal point processes}
\ifisarxiv\date{}\def\And{\and}\fi
\author{%
        Micha{\l } Derezi\'{n}ski \\
Department of Statistics\\
University of California, Berkeley\\
\texttt{mderezin@berkeley.edu}\\
\And
Feynman Liang \\
Department of Statistics\\
University of California, Berkeley\\
\texttt{feynman.liang@gmail.com}
\And
 Michael W. Mahoney\\
ICSI and Department of Statistics\\
University of California, Berkeley\\
\texttt{mmahoney@stat.berkeley.edu}
}
\begin{document}

\maketitle

\begin{abstract}
In experimental design, we are given $n$ vectors in $d$ dimensions, and
our goal is to select $k\ll n$ of them to perform expensive
measurements, e.g., to obtain labels/responses, for a linear regression task. Many statistical criteria have
been proposed for choosing the optimal design, with popular choices including
A- and D-optimality. If prior knowledge is given, typically in the form of a
$d\times d$ precision matrix $\A$, then all of the criteria can
be extended to incorporate that information via a Bayesian
framework. In this paper, we demonstrate a new fundamental connection
between Bayesian experimental design and determinantal point
processes, the latter being widely used for sampling diverse subsets of
data. We use this connection to develop new efficient algorithms for
finding $(1+\epsilon)$-approximations of optimal designs under four
optimality criteria: A, C, D and V. Our algorithms can achieve this
when the desired subset size $k$ is $\Omega(\frac{d_{\A}}{\epsilon} +
\frac{\log 1/\epsilon}{\epsilon^2})$, where $d_{\A}\leq d$ is the
$\A$-effective dimension, which can often be much smaller than $d$. Our
results offer direct improvements over a number of prior works, for both
Bayesian and classical experimental design, in terms of
algorithm efficiency, approximation quality, and range
of applicable criteria.
\end{abstract}

\section{Introduction}

Consider a collection of $n$ experiments parameterized by
$d$-dimensional vectors $\x_1,\dots,\x_n$, and let $\X$ denote the
$n\times d$ matrix with rows $\x_i^\top$. The outcome of the $i$th
experiment is a random variable $y_i = \x_i^\top\w + \xi_i$, where
$\w$ is the parameter vector of a linear model with prior distribution
$\Nc(\zero,\sigma^2\A^{-1})$, and $\xi_i\sim \Nc(0,\sigma^2)$ is
independent noise. In experimental design, we have access to the vectors $\x_i^\top$, for $i\in\{1,\ldots,n\}$, but we are allowed to
observe only a small number of outcomes $y_i$ for experiments we choose.
Suppose that we observe the outcomes from a subset
$S\subseteq\{1,...,n\}$ of $k$ experiments. The
posterior distribution of $\w$ given $\y_S$ (the vector of outcomes in $S$) is:
\begin{align*}
  \w\mid \y_S \ \sim\ \Nc\Big(\ (\X_S^\top\X_S + \A)^{-1}\X_S^\top\y_S
  ,\ \ \sigma^2(\X_S^\top\X_S+\A)^{-1}\ \Big),
\end{align*}
where $\X_S$ denotes the $k\times d$ matrix with rows $\x_i^\top$ for
$i\in S$.
In the Bayesian framework of experimental design \cite{bayesian-design-review}, we assume
that the prior precision matrix $\A$ of the linear model $\w$ is known,
and our goal is to choose $S$ so as to minimize some quantity (a.k.a.~an
optimality criterion) measuring the ``size'' of the
posterior covariance matrix $\Sigmab_{\w\mid\y_S} =
\sigma^2(\X_S^\top\X_S+\A)^{-1}$.
This quantity is a function of the subset
covariance $\X_S^\top\X_S$. Note that if matrix $\A$ is
non-invertible then, even though the prior distribution is
ill-defined, we can still interpret it as having no prior
information in the directions with eigenvalue 0. In particular, for
$\A=\zero$ we recover classical experimental design, where the
covariance matrix of $\w$ given $\y_S$ is
$\sigma^2(\X_S^\top\X_S)^{-1}$.
We will write the Bayesian optimality
criteria as functions $f_{\A}(\Sigmab)$, where $\Sigmab$ corresponds
to the subset covariance $\X_S^\top\X_S$. The following standard
criteria \cite{bayesian-design-review,optimal-design-pukelsheim} are of primary interest to us:
\begin{enumerate}
\item A-optimality:\quad $f_{\A}(\Sigmab) = \tr\big((\Sigmab+\A)^{-1}\big)$;
\item C-optimality:\quad $f_{\A}(\Sigmab) = \cb^\top(\Sigmab+\A)^{-1}\cb$ for
  some vector $\cb$;
\item D-optimality:\quad $f_{\A}(\Sigmab) = \det(\Sigmab+\A)^{-1/d}$;
  \item V-optimality:\quad $f_{\A}(\Sigmab) =
    \frac1n\tr\big(\X(\Sigmab+\A)^{-1}\X^\top\big)$.
  \end{enumerate}
Other popular criteria (less relevant to our discussion) include E-optimality,
$f_{\A}(\Sigmab)=\|(\Sigmab+\A)^{-1}\|$ (here, $\|\cdot\|$ denotes the
spectral norm) and G-optimality,
$f_{\A}(\Sigmab)=\max\diag(\X(\Sigmab+\A)^{-1}\X^\top)$.

The general task we consider is given as follows, where $[n]$
denotes $\{1,...,n\}$:

\textbf{Bayesian experimental design.}
Given an $n\times d$ matrix $\X$,
a criterion $f_{\A}(\cdot)$ and~$k\in[n]$,
\begin{align*}
  \text{efficiently minimize }\quad
  f_{\A}(\X_S^\top\X_S) \quad\text{over }\ S\subseteq [n]
  \ \text{ s.t. }\ |S|= k.
\end{align*}

\textbf{Optimal value.}
Given $\X$, $f_{\A}$ and $k$, we denote the optimum as $\opt =
\min_{S:|S|=k}f_{\A}(\X_S^\top\X_S)$.

The prior work around this problem can be grouped into two research questions.
The first question asks what can
we infer about $\opt$ just from the spectral
information about the problem, which is contained in the data covariance matrix
$\Sigmab_\X=\X^\top\X\in\R^{d\times d}$. The second question asks when does there exist a
polynomial time algorithm for finding a $(1+\epsilon$)-approximation
for $\opt$.

\textbf{Question 1:} \quad
Given only $\Sigmab_{\X}$, $f_{\A}$ and $k$,
what is the upper bound on $\opt$?

\textbf{Question 2:} \quad
Given $\X$, $f_{\A}$ and $k$, can we efficiently find a $(1+\epsilon)$-approximation for
$\opt$?

A key aspect of both of these questions is how large the subset
size $k$ has to be for us to provide useful answers. As a baseline, we
should expect meaningful results when $k$ is at least $\Omega(d)$ (see
discussion in \cite{near-optimal-design}), and in fact,
for classical experimental design (i.e., when $\A=\zero$), the problem becomes ill-defined
when $k<d$. In the Bayesian setting we should be able to exploit the
additional prior knowledge 
to achieve strong results even for $k\ll d$. Intuitively, the larger
the prior precision matrix $\A$, the fewer degrees of freedom we have
in the problem. To measure this, we use the statistical notion of
\emph{effective dimension} \cite{ridge-leverage-scores}.
\begin{definition}
For $d\times d$ psd matrices $\A$ and $\Sigmab$,
  let the $\A$-effective dimension of $\Sigmab$
be defined as $d_{\A}(\Sigmab) =
\tr\big(\Sigmab(\Sigmab+\A)^{-1}\big) \leq d$.
We will use the shorthand $d_{\A}$ when referring to $d_{\A}(\Sigmab_{\X})$.
\end{definition}

Recently, \cite{regularized-volume-sampling} obtained bounds on Bayesian
A/V-optimality criteria for $k\geq d_{\A}$, suggesting that $d_{\A}$ is the right notion of
degrees of freedom for this problem. We argue that $d_{\A}$ can in fact
be far too large of an estimate because it does not take into account
the size $k$ when computing the effective dimension. Intuitively, since
$d_{\A}$ is computed using the full data covariance $\Sigmab_{\X}$, it
is not in the appropriate scale with respect to the smaller covariance
$\X_S^\top\X_S$.
One way to correct this is to increase the regularization on
$\Sigmab_\X$ from $\A$ to $\frac{n}{k}\A$ and use
$d_{\frac nk\A}=d_{\frac{n}{k}\A}(\Sigmab_\X)$ as the degrees of
freedom. Another way is to rescale the full covariance to 
$\frac{k}{n}\Sigmab_\X$ and use $d_\A( \frac{k}{n} \Sigmab_\X)$ as the
degrees of freedom. In fact, since $d_{\frac{n}{k}\A}(\Sigmab_\X) = d_\A(
\frac{k}{n} \Sigmab_\X)$, these two approaches are identical.  
Note that $d_{\frac
  nk\!\A}\leq d_{\A}$ and this gap can be very large for some
problems (see discussion in Appendix \ref{a:deff}). 
The following result supports the above reasoning by showing that for any
$k$ such that $k\geq 4d_{\frac nk\!\A}$, there is $S$ of size $k$
which satisfies $f_{\A}(\X_S^\top\X_S) = O(1)\cdot f_{\A}(\frac kn\Sigmab_{\X})$.
This not only improves on \cite{regularized-volume-sampling} in terms
of the supported range of sizes $k$, but also in terms of the obtained bound (see
Section \ref{s:related-work} for a~comparison).
\begin{theorem}\label{t:q1}
  Let $f_{\A}$ be A/C/D/V-optimality and $\X$ be
  $n\times d$. For any $k$ such that $k\geq 4d_{\frac
    nk\!\A}$, 
  \begin{align*}
\opt    \leq \bigg(\ 1+8\,\frac{d_{\frac
    nk\!\A}}{k} + 8\sqrtshort{\frac{\ln (k/d_{\frac nk\!\A})}{k}}\
    \bigg)\cdot f_{\A}\big(\tfrac kn\Sigmab_{\X}\big).
  \end{align*}
\end{theorem}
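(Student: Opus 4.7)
The plan is to exhibit a random subset $S \subseteq [n]$ of size $k$ whose expected criterion value is bounded by the right-hand side, and then invoke the probabilistic method. Given the paper's overarching theme, the natural candidate distribution is a regularized size-$k$ DPP on the rows of $\X$, calibrated with the scaled regularizer $\frac{n}{k}\A$. Intuitively, the rescaling is exactly what is needed so that the sampled covariance $\X_S^\top\X_S$ ends up on the scale of $\frac{k}{n}\Sigmab_\X$, and so that the effective dimension that controls concentration is measured against $\frac{n}{k}\A$ rather than $\A$ itself.

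The central technical step I would try to establish is a PSD-order inequality of the form
\begin{align*}
  \E\bigl[(\X_S^\top\X_S + \A)^{-1}\bigr] \preceq \Bigl(1 + 8\tfrac{d_{\frac{n}{k}\A}}{k} + 8\sqrtshort{\tfrac{\ln(k/d_{\frac{n}{k}\A})}{k}}\Bigr)\cdot\bigl(\tfrac{k}{n}\Sigmab_\X + \A\bigr)^{-1}.
\end{align*}
The first correction $d_{\frac{n}{k}\A}/k$ should come out of the closed-form DPP expectation for $(\X_S^\top\X_S+\A)^{-1}$ (a regularized analogue of the classical volume-sampling identity, matching the form that appears in the Nystr\"om/ridge leverage literature). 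The square-root term should arise from a matrix Chernoff/Bernstein style concentration applied to the sum of sampled rank-one outer products $\x_i\x_i^\top$, with the effective intrinsic dimension being precisely $d_{\frac{n}{k}\A}$ because of how the ridge $\A$ interacts with the $\frac{k}{n}$-scaled target covariance. The hypothesis $k \geq 4 d_{\frac{n}{k}\A}$ is what keeps the concentration bound below a small constant.

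Once this PSD inequality is in hand, A-, C- and V-optimality follow at once: each of the three criteria is a linear and PSD-monotone functional of $(\X_S^\top\X_S + \A)^{-1}$ (take $\tr$, take $\cb^\top(\cdot)\cb$, or take $\frac{1}{n}\tr(\X(\cdot)\X^\top)$, respectively), so taking the expectation and applying the PSD bound yields exactly $(1+\epsilon_k)\cdot f_\A(\frac{k}{n}\Sigmab_\X)$ with the claimed $\epsilon_k$. The probabilistic method then supplies a deterministic $S$ of size $k$ realizing the inequality.

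D-optimality is the one piece that does not reduce to the PSD inequality, because $\det(\cdot)^{-1/d}$ is nonlinear in the matrix inverse. I would handle it by turning the same DPP concentration into an eigenvalue-level statement: with positive probability (or in expectation of $\log\det$), every eigenvalue of $\X_S^\top\X_S+\A$ is within the factor $(1+\epsilon_k)^{-1}$ of the corresponding eigenvalue of $\frac{k}{n}\Sigmab_\X + \A$, which after taking the $d$-th root of determinants yields the same bound. I expect the hardest part of the proof to be nailing down the exact constants in the Chernoff concentration term with the logarithmic factor $\ln(k/d_{\frac{n}{k}\A})$; DPP samples are negatively correlated and require a matrix Chernoff bound adapted to non-i.i.d.\ sampling, and carrying the effective-dimension scaling through this argument cleanly (for all four criteria simultaneously) is the principal obstacle.
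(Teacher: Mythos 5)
Your high-level architecture (sample from a regularized DPP, bound the expected criterion, apply the probabilistic method, reduce A/C/V to a PSD-order inequality on the inverse) matches the paper, but the central technical step is wrong in a way that matters. The paper does \emph{not} obtain the error factor $1+8\,d_{\frac nk\!\A}/k+8\sqrt{\ln(k/d_{\frac nk\!\A})/k}$ from matrix Chernoff concentration of $\X_S^\top\X_S$. Instead it proves an \emph{exact, error-free} expectation inequality $\E\big[(\X_S^\top\X_S+\A)^{-1}\big]\preceq\big(\sum_ip_i\x_i\x_i^\top+\A\big)^{-1}$ (and a separate one for $\det(\cdot)^{-1}$, handling D-optimality via Jensen) using the determinantal adjugate identity $\E[\adj(\sum_ib_i\x_i\x_i^\top+\A)]=\adj(\sum_ip_i\x_i\x_i^\top+\A)$. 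All of the error terms in the theorem come from a completely different source: the subset size $|S|$ of $\DPPreg{p}(\X,\A)$ is random, so one shrinks the weights to $p_i=\frac{k}{n(1+\epsilon)}$, uses the fact that $|S|$ is a Poisson binomial variable to show $\Pr(|S|>k)\leq d_w/k$, conditions on $\{|S|\leq k\}$, and finishes with Markov's inequality. Your route cannot be repaired to give the stated constants: a matrix Chernoff lower bound on $\X_S^\top\X_S+\A$ with constant relative error at $k=\Theta(d_{\frac nk\!\A})$ samples carries an unavoidable extra $\log$ factor (a single row can have ridge leverage score near $1$, and no concentration argument forces its inclusion without the determinantal reweighting), and even a high-probability spectral bound does not control $\E[(\X_S^\top\X_S+\A)^{-1}]$, since on the failure event the inverse can be as large as $\|\A^{-1}\|$, which is unbounded when $\A$ is singular (e.g., the classical case $\A=\zero$ that the theorem covers).

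Two smaller but real issues. First, the regularizer inside the DPP must be $\A$ itself, not $\frac nk\A$: the expectation identity bounds the inverse of $\X_S^\top\X_S$ plus whatever matrix regularizes the DPP, and the target criterion is $f_{\A}$, so putting $\frac nk\A$ into the kernel bounds the wrong matrix. The $\frac kn$ scaling is absorbed into the inclusion probabilities $p_i\approx\frac kn$, and the identity $d_{\A}(\frac kn\Sigmab_\X)=d_{\frac nk\!\A}(\Sigmab_\X)$ is what makes the scaled effective dimension appear. Second, if you instead use a fixed-size ($k$-DPP) variant to sidestep the random size, you lose the product-Bernoulli structure that makes the adjugate expectation identity exact, so the size-truncation argument (and hence the $\sqrt{\ln(k/d)/k}$ term, which is exactly the price of forcing $|S|\leq k$) cannot be avoided.
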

\begin{remark}
We give an $O(ndk+k^2d^2)$ time algorithm for finding subset $S$ that certifies
  this bound.
\end{remark}
To establish Theorem~\ref{t:q1}, we propose
a new sampling distribution $\DPPreg{p}(\X,\A)$, where
$p=(p_1,...,p_n)\in[0,1]^n$ is a vector of weights. This is a special
\emph{regularized} variant of a determinantal point
process (DPP), which is a well-studied family of distributions
\cite{dpp-ml} with numerous applications in sampling diverse subsets
of elements. Given a psd matrix $\A$ and a weight vector $p$, we
define $\DPPreg{p}(\X,\A)$ as a distribution over subsets $S\subseteq[n]$ (of all
sizes) such that:
\begin{align*}
(\text{see Def.~\ref{d:r-dpp}})\qquad  \Pr(S)\propto \det(
  \X_S^\top\X_S+\A)\ \cdot \prod_{i\in S}p_{i}\cdot\prod_{i\not\in
  S}(1-p_i).
\end{align*}
A number of regularized DPPs have been proposed recently
\cite{dpp-intermediate,regularized-volume-sampling}, mostly within the context of Randomized Numerical Linear Algebra (RandNLA)~\cite{Mah-mat-rev_JRNL,DM16_CACM,RandNLA_PCMIchapter_TR}.
To our knowledge, ours is the first such definition that strictly falls
under the traditional definition of a DPP \cite{dpp-ml}.
We show this in Section \ref{s:r-dpp}, where we also prove that
regularized DPPs can be decomposed into a low-rank DPP plus
i.i.d. Bernoulli sampling (Theorem \ref{t:algorithm}). This decomposition reduces
the sampling cost from $O(n^3)$ to $O(nd^2)$,
and involves a more general result about DPPs defined via a
correlation kernel (Lemma \ref{l:decomposition}), which is of
independent interest.

To prove Theorem \ref{t:q1}, in Section \ref{s:guarantees} we demonstrate a fundamental connection between an
$\A$-regularized DPP and Bayesian experimental design with precision
matrix $\A$. For simplicity of exposition, let the weight vector $p$ be uniformly equal $(\frac kn,...,\frac kn)$. If
$S\sim\DPPreg{p}(\X,\A)$ and $f_{\A}$ is any one of the
A/C/D/V-optimality criteria, then:
\begin{align}
\text{(a)}\ \ \E\big[f_{\A}(\X_S^\top\X_S)\big] \leq f_{\A}\big(\tfrac
  kn\Sigmab_{\X}\big)\quad\text{and}\quad\text{(b)}\ \ \E\big[|S|\big]
  \leq d_{\frac nk\!\A}+k. \label{eq:f-ineq}
\end{align}
Theorem \ref{t:q1} follows
by showing an inequality similar to (\ref{eq:f-ineq}a) when $\DPPreg{p}(\X,\A)$ is
restricted to subsets of size at most $k$ (proof in
Section~\ref{s:guarantees}).
When $\A=\zero$, then $\DPPreg{p}(\X,\A)$ bears a lot of
similarity to \emph{proportional volume sampling} which is an
(unregularized) determinantal distribution proposed by
\cite{proportional-volume-sampling}. That work used an inequality similar to
(\ref{eq:f-ineq}a) for obtaining
$(1+\epsilon)$-approximate algorithms in A/D-optimal classical experimental
design (Question 2 with $\A=\zero$). However,
the algorithm of \cite{proportional-volume-sampling} for proportional
volume sampling takes $O(n^4dk^2\log k)$ time, making it practically
infeasible. On the other hand, the time complexity of sampling from
$\DPPreg{p}(\X,\A)$ is only $O(nd^2)$, and recent advances in RandNLA for DPP
sampling
\cite{leveraged-volume-sampling,correcting-bias,dpp-intermediate}
suggest that $O(nd\log n + \poly(d))$ time
is also possible. Extending the ideas of
\cite{proportional-volume-sampling} to our new regularized DPP
distribution, we obtain efficient $(1+\epsilon)$-approximation
algorithms for A/C/D/V-optimal Bayesian experimental design.

\begin{theorem}\label{t:q2}
  Let $f_{\A}$ be A/C/D/V-optimality and $\X$ be $n\times d$. If
  $k=\Omega\big(\frac{d_{\A}}{\epsilon} +
  \frac{\log1/\epsilon}{\epsilon^2}\big)$ for some $\epsilon\in(0,1)$, then there is a polynomial time
  algorithm that finds $S\subseteq [n]$ of size $k$ such that
  \begin{align*}
    f_{\A}\big(\X_S^\top\X_S\big) \leq (1+\epsilon)\cdot\opt.
  \end{align*}
\end{theorem}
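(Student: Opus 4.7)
The plan is to extend the proportional volume sampling strategy of \cite{proportional-volume-sampling} using our regularized DPP $\DPPreg{p}(\X,\A)$, which naturally carries the prior $\A$ in its weighting. First solve the convex continuous relaxation
\begin{equation*}
\opt^* = \min_{\pi\in[0,1]^n,\,\sum_i\pi_i=k}\ f_\A\big(\X^\top\diag(\pi)\X\big),
\end{equation*}
which is a convex program for each of A/C/D/V-optimality and satisfies $\opt^*\leq\opt$ since integer solutions are feasible. Let $\pi^*$ be an optimal solution, set the DPP weights to $p=\pi^*$, and sample $S\sim\DPPreg{p}(\X,\A)$ in $O(nd^2)$ time via Theorem~\ref{t:algorithm}.

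The core technical step is to prove the regularized analog of the \cite{proportional-volume-sampling} polynomial inequality,
\begin{equation*}
\E_{S\sim\DPPreg{p}(\X,\A)}\big[f_\A(\X_S^\top\X_S)\big] \leq f_\A\big(\X^\top\diag(p)\X\big) = \opt^*.
\end{equation*}
The strategy is to express $f_\A(\X_S^\top\X_S)$ as a ratio whose numerator can be absorbed into the $\det(\X_S^\top\X_S+\A)$ weight of the DPP density (e.g.\ using $\tr((\M+\A)^{-1})=\partial_t\log\det(\M+\A+t\I)\big|_{t=0}$ for A-optimality, and the matrix determinant lemma for C-optimality). After this cancellation, what remains is a $p$-weighted sum of minors of an augmented matrix to which Cauchy--Binet and elementary symmetric function inequalities apply, reducing each criterion to a Newton-type inequality in the spirit of \cite{proportional-volume-sampling}.

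To convert this expectation bound into a $(1+\epsilon)$-approximation on a size-$k$ subset, we control $|S|$ via the decomposition of $\DPPreg{p}$ into a rank-$d$ DPP plus independent Bernoullis from Theorem~\ref{t:algorithm}: the DPP contributes at most $d$ elements and the Bernoullis have total mean at most $k$, so a Bernstein-type tail bound on the Bernoulli part combined with the mean bound $\E[|S|]\leq d_\A+k$ from~(\ref{eq:f-ineq}b) gives $\Pr(|S|\leq k)\geq 1-\epsilon/3$ under the hypothesis $k=\Omega(d_\A/\epsilon+\log(1/\epsilon)/\epsilon^2)$. Markov applied to the conditional distribution then yields $\E\big[f_\A(\X_S^\top\X_S)\mid|S|\leq k\big]\leq (1+\epsilon)\opt$, and any $|S|<k$ is padded with arbitrary indices, using monotonicity of $f_\A$ in the PSD order so that padding never hurts the objective.

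The main obstacle will be the expected-value inequality in the second paragraph, which must be proved uniformly across all four criteria while incorporating the regularizer $\A$ that \cite{proportional-volume-sampling} does not treat. C- and V-optimality in particular require tracking an external vector $\cb$ and the full data matrix $\X$ respectively through the Cauchy--Binet expansion, and were not covered by \cite{proportional-volume-sampling} even in the unregularized case. Once the inequality is in hand, a conditional-expectation derandomization (or pipage-style rounding) converts the existential guarantee into a deterministic polynomial-time algorithm.
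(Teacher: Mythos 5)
Your overall route is the same as the paper's: solve the convex relaxation to get fractional weights summing to $k$, round by sampling from $\DPPreg{p}(\X,\A)$, prove the expectation inequality $\E[f_{\A}(\X_S^\top\X_S)]\leq f_{\A}(\sum_i p_i\x_i\x_i^\top)$, control $|S|$, and finish with Markov plus padding. The expectation inequality you flag as the main obstacle is indeed the heart of the argument; the paper proves it uniformly for A/C/D/V via the determinantal averaging identity applied entrywise to the adjugate, $\E[\adj(\sum_i b_i\x_i\x_i^\top+\A)]=\adj(\sum_i p_i\x_i\x_i^\top+\A)$, which gives $\E[(\X_S^\top\X_S+\A)^{-1}]\preceq(\sum_i p_i\x_i\x_i^\top+\A)^{-1}$ in one stroke (A, C, V are linear functionals of the inverse; D uses the analogous determinant identity). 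Your criterion-by-criterion Cauchy--Binet plan is plausible but heavier than needed.

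There is, however, a genuine gap in your size-control step. You set $p=\pi^*$ with $\sum_i\pi_i^*=k$, so the Bernoulli component of the sample alone has mean exactly $k$; adding the nonnegative DPP component $T$ (with $\E[|T|]=d_{\A}(\Sigmab_{\pi^*})$) pushes $\E[|S|]$ strictly above $k$. No Bernstein-type bound can then give $\Pr(|S|\leq k)\geq 1-\epsilon/3$; for a Poisson binomial variable with mean $\geq k$ you get at best $\Pr(|S|\leq k)\approx\tfrac12$, and it tends to $0$ once $d_{\A}\gg\sqrt{k}$. The fix --- and the reason the paper's proof works --- is to deliberately shrink the sampling probabilities to $p_i=\pi_i^*/(1+\epsilon)$. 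This costs only a factor $f_{\A}\big(\tfrac1{1+\epsilon}\Sigmab_{\pi^*}\big)\leq(1+\epsilon)f_{\A}(\Sigmab_{\pi^*})$ in the expectation bound, but creates slack $\E[|S|]\leq k\big(1+\tfrac{d_{\A}}{k}-\tfrac{\epsilon}{1+\epsilon}\big)$, which is bounded away from $k$ precisely when $k\gtrsim d_{\A}/\epsilon$; the Poisson-binomial tail bound then gives $\Pr(|S|>k)\leq \ee^{-k\epsilon^2/32}$, and the $\log(1/\epsilon)/\epsilon^2$ term in the hypothesis on $k$ is exactly what makes this probability small. Without this downscaling your conditional-expectation step fails, and with it your argument becomes the paper's. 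One minor further point: conditioning on $\{|S|\leq k\}$ bounds the \emph{conditional expectation} by $\E[f_{\A}]/\Pr(|S|\leq k)$; you still need a second application of Markov (and rejection sampling, or a derandomization as you suggest) to extract an actual subset achieving that value.
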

\begin{remark}
  The algorithm referred to in Theorem~\ref{t:q2} first solves a convex relaxation of the task via a
  semi-definite program (SDP) to find the weights $p\in[0,1]^n$,
  then samples from the $\DPPreg{p}(\X,\A)$ distribution
  $O(1/\epsilon)$ times. The expected cost in addition to the SDP is $O(ndk+k^2d^2)$.
\end{remark}

\begin{wrapfigure}{r}{0.6\textwidth}
  \vspace{-4mm}
  \centering
  \renewcommand{\arraystretch}{1.5}
\begin{tabular}{r||c|c|l}
 &Criteria&Bayesian&$k=\Omega(\cdot)$\\
  \hline\hline
\small  \cite{tractable-experimental-design}
  &\small A,V&\xmark&$\frac{d^2}{\epsilon}$\\
\small  \cite{near-optimal-design}
  &\small A,C,D,E,G,V&\cmark&$\frac {d}{\epsilon^2}$\\
\small  \cite{proportional-volume-sampling}
  &\small A,D&\xmark&$\frac{d}{\epsilon} +
  \frac{\log1/\epsilon}{\epsilon^2}$\\
  \hline
\small\textbf{this paper}&\small A,C,D,V&\cmark& $\frac{d_{\A}}{\epsilon} +
  \frac{\log1/\epsilon}{\epsilon^2}$
\end{tabular}
\vspace{2mm}
\captionof{table}{Comparison of SDP-based $(1+\epsilon)$-approximation algorithms for
  classical and Bayesian experimental design (X-mark means that only
  the classical setting applies).}
\label{tab:comp}
\vspace{-3mm}
\end{wrapfigure}
Note that unlike in Theorem \ref{t:q2} we use the unrescaled effective
dimension $d_{\A}$ instead of the rescaled one, $d_{\frac nk\!\A}$. The
actual effective dimension that applies here (given in the proof in
Section \ref{s:guarantees}) depends on the SDP solution. It is
always upper bounded by $d_{\A}$, but it may be significantly
smaller. This result is a direct extension of
\cite{proportional-volume-sampling} to Bayesian setting and to
C/V-optimality criteria. Moreover, in their case, proportional
volume sampling is usually the computational bottleneck (because
its time dependence on the dimension can reach $O(d^{11})$), whereas for us
the cost of sampling is negligible compared to the SDP. A
number of different methods can be used to solve the SDP relaxation (see
Section \ref{s:experiments}). For
example, \cite{near-optimal-design} suggest using an iterative
optimizer called entropic mirror
descent, which is known to exhibit fast convergence and
can run in $O(nd^2T)$ time, where $T$ is the number of iterations.

\section{Related work}
\label{s:related-work}

We first discuss the prior works that focus on bounding the experimental
design optimality criteria without obtaining
$(1+\epsilon)$-approximation algorithms. First non-trivial
bounds for the \emph{classical} A-optimality criterion (with $\A=\zero$) were shown by
\cite{avron-boutsidis13}. Their result implies that for any $k\geq d$,
$\opt\leq
(1+ \frac{d-1}{k-d+1})\cdot f_{\zero}(\frac kn\Sigmab_\X)$ and they
provide polynomial time algorithms for finding such solutions. The result
was later extended by
\cite{unbiased-estimates, regularized-volume-sampling,unbiased-estimates-journal} to the
case where $\A=\lambda\I$, obtaining that for any $k\geq
d_{\lambda\I}$, we have
$\opt\leq (1 +
\frac{d_{\lambda\I}-1}{k-d_{\lambda\I}+1})\cdot f_{\frac
  kn\!\lambda\I}(\frac kn\Sigmab_\X)$, and also a faster $O(nd^2)$ time
algorithm was provided. Their result can be
easily extended to cover any psd matrix $\A$ and V/C-optimality (but not D-optimality). The key
improvements of our Theorem \ref{t:q1} are that we cover a potentially
much wider range of subset sizes, because $d_{\frac nk\!\lambda\I}\leq
d_{\lambda\I}$, and our bound can be much tighter because
$f_{\lambda\I}(\frac kn\Sigmab_\X)\leq f_{\frac kn\!\lambda\I}(\frac
kn\Sigmab_\X)$. Finally, \cite{minimax-experimental-design} propose a
new notion of \emph{minimax} experimental design, which is related to A/V-optimality. They also use a determinantal distribution
for subset selection, however, due to different assumptions, their
bounds are incomparable.

A number of works proposed $(1+\epsilon)$-approximation algorithms
for experimental design which start with solving a convex
relaxation of the problem, and then use some rounding strategy to
obtain a discrete solution (see Table \ref{tab:comp} for comparison). For example,
\cite{tractable-experimental-design} gave an approximation algorithm
for classical A/V-optimality with $k=\Omega(\frac{d^2}{\epsilon})$, where the
rounding is done in a greedy fashion, and some randomized rounding
strategies are also discussed. \cite{proportional-volume-sampling}
suggested \emph{proportional volume sampling} for the rounding step
and obtained approximation algorithms for classical A/D-optimality with
$k=\Omega(\frac d\epsilon+\frac {\log1/\epsilon}{\epsilon^2})$. Their
approach is particularly similar to ours (when $\A=\zero$). However,
as discussed earlier, while their algorithms are polynomial, they are
virtually intractable. \cite{near-optimal-design} proposed
an efficient algorithm with a $(1+\epsilon)$-approximation guarantee for a wide
range of optimality criteria, including A/C/D/E/V/G-optimality, both classical and Bayesian, when
$k=\Omega(\frac d{\epsilon^2})$.  Our results improve on this work in
two ways: (1) in terms of the dependence on $\epsilon$ for
A/C/D/V-optimality, and (2) in
terms of the dependence on the dimension (by replacing $d$ with
$d_{\A}$) in the Bayesian setting. A lower bound shown by
\cite{proportional-volume-sampling} implies that our Theorem \ref{t:q2} cannot be
directly extended to E-optimality, but a similar lower bound does not
exist for G-optimality. We remark that the approximation approaches
relying on a convex relaxation can generally be converted to
an upper bound on $\opt$ akin to our Theorem \ref{t:q1},
however none of them apply to the regime of $k\leq d$,
which is of primary interest in the Bayesian setting.

Purely greedy approximation algorithms have been shown to provide
guarantees in a number of special cases for experimental design. One
example is classical D-optimality criterion, which can be converted to
a submodular function \cite{submodularity-optimal-design}. Also, greedy algorithms
for Bayesian A/V-optimality criteria have been considered
\cite{greedy-supermodular,greedy-graph-sampling}. These
methods can only provide a constant factor approximation guarantee
(as opposed to $1+\epsilon$), and the factor is generally problem
dependent (which means it could be arbitrarily large).
Finally, a number of
heuristics with good empirical performance have been proposed, such as Fedorov's
exchange method \cite{cook1980comparison}. However, in this work
we focus on methods that provide theoretical approximation guarantees.

\section{A new regularized determinantal point process}
\label{s:r-dpp}
  In this section we introduce the determinantal sampling distribution we use for
  obtaining guarantees in Bayesian experimental design.
  Determinantal point processes (DPP) form a family of distributions
  which are used to model repulsion between elements in a random set,
  with many applications in machine learning \cite{dpp-ml}. Here, we
  focus on the setting where we are sampling out of all $2^n$ subsets
  $S\subseteq[n]$. Traditionally, a DPP is defined by a
  correlation kernel, which is an $n\times n$ psd matrix $\K$ with
  eigenvalues between 0 and 1, i.e., such that $\zero\preceq\K\preceq
  \I$. Given a correlation kernel $\K$, the corresponding DPP is
  defined as
  \begin{align*}
    S\sim\DPPcor(\K)\qquad\text{iff}\qquad
    \Pr(T\subseteq S) = \det(\K_{T,T})\ \  \forall_{T\in[n]},
  \end{align*}
where $\K_{T,T}$ is the submatrix of $\K$ with rows and columns
indexed by $T$. Another way of defining a DPP, popular in the machine
learning community, is via an ensemble kernel $\L$. Any psd matrix
$\L$ is an ensemble kernel of a DPP defined as:
  \begin{align*}
    S\sim\DPPens(\L)\qquad\text{iff}\qquad \Pr(S) \propto \det(\L_{S,S}).\quad~
  \end{align*}
Crucially, every $\DPPens$ is also a $\DPPcor$, but not the other way
around. Specifically, we have:
\begin{align*}
 \text{(a)}\ \  \DPPcor(\K) = \DPPens\big(\I -
  (\I+\L)^{-1}\big),\quad\text{and}\quad
  \text{(b)}\ \ \DPPens(\L) = \DPPcor\big(\K(\I-\K)^{-1}\big),
\end{align*}
but (b) requires that $\I-\K$ be invertible which is not true
for some DPPs.
(This will be important in our analysis.)
The classical algorithm for sampling from a DPP requires the
eigendecomposition of either matrix $\K$ or $\L$, which in general
costs $O(n^3)$, followed by a sampling procedure which costs
$O(n\,|S|^2)$ \cite{dpp-independence,dpp-ml}.

We now define our regularized DPP and describe its
connection with correlation and ensemble~DPPs.
\begin{definition}\label{d:r-dpp}
Given matrix $\X\in\R^{n\times d}$, a sequence  $p=(p_1,\dots,p_n)\in[0,1]^n$
and a psd matrix $\A\in\R^{d\times d}$ such that
$\sum_ip_i\x_i\x_i^\top\!+\A$ is
full rank, let
$\DPPreg{p}(\X,\A)$ be a distribution over $S\subseteq [n]$:
  \begin{align}
  \Pr(S) = \frac{\det(
  \X_S^\top\X_S+\A)}{\det\!\big(
 \sum_ip_i\x_i\x_i^\top+ \A\big)}\
\cdot \prod_{i\in S}p_{i}\cdot\prod_{i\not\in S}(1-p_i).\label{eq:poisson-prob}
\end{align}
\end{definition}
The fact that this is a proper distribution (i.e., that it sums to
one) can be restated as a determinantal expectation formula: if
$b_i\sim\mathrm{Bernoulli}(p_i)$ are independent Bernoulli random
variables, then
\begin{align*}
\sum_{S\subseteq [n]}\!\det(
  \X_S^\top\X_S+\A)\prod_{i\in S}p_{i}\prod_{i\not\in S}(1-p_i) =
  \E\bigg[\!\det\!\Big(\sum_ib_i\x_i\x_i^\top\!+\A\Big)\bigg]\overset{(*)}{=}
  \det\!\Big(\sum_i\E[b_i]\x_i\x_i^\top\!+\A\Big),
\end{align*}
where $(*)$ was shown by \cite[Lemma
7]{determinantal-averaging} and $\E[b_i]=p_i$.
Our main result in this section is the following efficient algorithm
for $\DPPreg{p}(\X,\A)$ which reduces it to sampling from a
correlation DPP.
\begin{theorem}\label{t:algorithm}
For any $\X\in\R^{n\times d}$, $p\in[0,1]^n$ and a psd matrix $\A$
s.t.~$\sum_ip_i\x_i\x_i^\top\!+\A$ is
full rank, let
\[T\sim\DPPcor\big(\D_p^{\sfrac12}\X(\A+\X^\top\D_p\X)^{-1}\X^\top\D_p^{\sfrac12}\big),
  \quad\text{where}\quad\D_p=\diag(p).\]
If $b_i\sim\mathrm{Bernoulli}(p_i)$ are independent random variables, then
$T\cup \{i:b_i\!=\!1\}\sim \DPPreg{p}(\X,\A)$.
\end{theorem}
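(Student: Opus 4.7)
The plan is to identify the distribution of $T\cup B$ via its inclusion probabilities, reinterpret it as a correlation DPP with a concrete kernel, and then convert back to the atomic-probability form in Definition~\ref{d:r-dpp}.

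\textbf{Step 1 (Identify $T\cup B$ as a correlation DPP).} First I would note that $B=\{i:b_i=1\}$ is itself a correlation DPP with diagonal kernel $\D_p$, since $\Pr(R\subseteq B)=\prod_{i\in R}p_i=\det(\D_p|_{R,R})$. For any $R\subseteq[n]$, conditioning on the intersection $B\cap R$ and using independence of $T$ and $B$,
\[
\Pr(R\subseteq T\cup B)=\sum_{S\subseteq R}\det(\K_{S,S})\prod_{i\in R\setminus S}p_i\prod_{i\in S}(1-p_i).
\]
The standard multilinear identity $\det(\diag(a)+\diag(b)^{1/2}\M\diag(b)^{1/2})=\sum_S\det(\M_{S,S})\prod_{i\in S}b_i\prod_{i\notin S}a_i$ then collapses the sum to $\det(\K'|_{R,R})$, where
\[
\K'=\D_p+\D_{1-p}^{1/2}\,\K\,\D_{1-p}^{1/2}.
\]
Since all inclusion probabilities determine the distribution, $T\cup B\sim\DPPcor(\K')$.

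\textbf{Step 2 (Convert to ensemble form and simplify).} Next I would compute the ensemble kernel $\L'=(\I-\K')^{-1}-\I$. The factorization $\I-\K'=\D_{1-p}^{1/2}(\I-\K)\D_{1-p}^{1/2}$ together with the Woodbury identity $(\I-\K)^{-1}=\I+\D_p^{1/2}\X\A^{-1}\X^\top\D_p^{1/2}$ (which uses $(\A+\X^\top\D_p\X)-\X^\top\D_p\X=\A$) gives
\[
\L'=\D_{p/(1-p)}+\D_{\sqrt{p/(1-p)}}\,\X\A^{-1}\X^\top\,\D_{\sqrt{p/(1-p)}}.
\]
For fixed $S\subseteq[n]$, the standard correlation-to-ensemble atomic formula $\Pr(T\cup B=S)=\det(\I-\K')\det(\L'|_{S,S})$ combined with Sylvester's identity yields
\[
\det(\I-\K')=\Big(\prod_i(1-p_i)\Big)\frac{\det\A}{\det(\A+\X^\top\D_p\X)},\qquad \det(\L'|_{S,S})=\prod_{i\in S}\frac{p_i}{1-p_i}\cdot\frac{\det(\X_S^\top\X_S+\A)}{\det\A}.
\]
Multiplying collapses the $\det\A$ factors and reassembles exactly the probability mass in Definition~\ref{d:r-dpp}.

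\textbf{Main obstacle.} The only real difficulty is handling boundary cases where the inverses above fail: some $p_i=1$ (so $\D_{1-p}$ is singular and $\K$ can have eigenvalue $1$), or $\A$ is merely psd rather than strictly positive definite. For $p_i=1$, such an index lies in $B$ almost surely, so one can condition it out and apply the argument to the remaining coordinates. For singular $\A$, one perturbs $\A\leftarrow \A+\delta\I$ and passes to the limit $\delta\to 0^+$, using continuity of both the regularized DPP probabilities and the kernel $\K$ in $\A$, which is justified because the normalizer $\det(\A+\X^\top\D_p\X)$ is assumed strictly positive throughout.
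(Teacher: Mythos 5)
Your proposal is correct and follows essentially the same route as the paper: your Step 1 is the paper's Lemma \ref{l:decomposition} (union with Bernoulli sampling conjugates the kernel by $(\I-\D_p)^{\sfrac12}$ and adds $\D_p$), your Step 2 reverses the paper's Lemmas \ref{t:reduction} and \ref{l:correlation} (ensemble kernel $\Dbt+\Dbt^{\sfrac12}\X\A^{-1}\X^\top\Dbt^{\sfrac12}$ and the correlation/ensemble conversion), and your perturbation-and-limit handling of singular $\A$ and $p_i=1$ matches the paper's limit argument with $\A_\epsilon=\A+\epsilon\I$ and $p_\epsilon=(1-\epsilon)p$. The only difference is that you traverse the chain from the algorithm's output back to Definition \ref{d:r-dpp} rather than the other way around, which is immaterial.
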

\ifisarxiv
\begin{minipage}{0.6\textwidth}
\else
  \begin{minipage}{0.55\textwidth}
\fi
\begin{remark}
Since the correlation kernel matrix has rank at most $d$, the preprocessing
cost of eigendecomposition is $O(nd^2)$. Then, each sample costs only $O(n\,|T|^2)$.
\end{remark}
\ifisarxiv\vspace{-3mm}\fi
We prove the theorem in three steps. First, we express
$\DPPreg{p}(\X,\A)$ as an ensemble DPP, which requires some additional
assumptions on $\A$ and $p$ to be possible. Then, we convert the
ensemble to a correlation kernel (eliminating the extra assumptions),
and finally show that this kernel can be decomposed into a rank $d$
kernel plus Bernoulli sampling.
\end{minipage}\hfill
\ifisarxiv
\begin{minipage}{0.39\textwidth}
\else
\begin{minipage}{0.43\textwidth}
\fi
\renewcommand{\thealgorithm}{}
\floatname{algorithm}{}
\vspace{-7mm}
\begin{algorithm}[H]
{\fontsize{9}{9}\selectfont
  \caption{\bf \small Sampling \ $S\sim\DPPreg{p}(\X,\A)$}
  \begin{algorithmic}[0]
    \STATE \textbf{Input:} $\X\!\in\!\R^{n\times d}\!,\text{ psd } \A\!\in\!\R^{d\times
      d}\!, p\!\in\![0,1]^n$\\[1mm]
    \STATE Compute $\Z \leftarrow \A+\X^\top\D_p\X$\\[1mm]
    \STATE Compute \textsc{SVD} of  $\B=\D_p^{\sfrac12}\X\Z^{-\sfrac12}$\\[1mm]
    \STATE Sample $T\sim\DPPcor(\B\B^\top)$ \hfill \cite{dpp-independence}\\[1mm]
    \STATE Sample $b_i\sim\mathrm{Bernoulli}(p_i)$ for $i\in[n]$\\[1mm]
    \RETURN $S=T\cup\{i:b_i=1\}$
 \end{algorithmic}
}
\end{algorithm}
\end{minipage}

\vspace{2mm}
\begin{lemma}\label{t:reduction}
  Given $\X$, $\A$ and $\D_p$ as in Theorem \ref{t:algorithm}, assume that $\A$ and $\I-\D_p$ are
  invertible. Then,
  \begin{align*}
    \DPPreg{p}(\X,\A)=\DPPens\big(\Dbt+
    \Dbt^{\sfrac12}\X\A^{-1}\X^\top
    \Dbt^{\sfrac12}\big),\quad\text{where}
    \quad\Dbt = \D_p(\I-\D_p)^{-1}.
  \end{align*}
\end{lemma}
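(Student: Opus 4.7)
The plan is to rewrite the unnormalized density of $\DPPreg{p}(\X,\A)$ in the form $\det(\L_{S,S})$ (up to an $S$-independent constant), which is exactly the unnormalized density of $\DPPens(\L)$ for $\L=\Dbt+\Dbt^{\sfrac12}\X\A^{-1}\X^\top\Dbt^{\sfrac12}$. Since both distributions sum to $1$, equality of the unnormalized densities up to a constant forces equality of the distributions.

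First, I would rewrite the Bernoulli factor as
\begin{align*}
\prod_{i\in S}p_i\prod_{i\notin S}(1-p_i) \;=\; \Bigl(\prod_{i=1}^n(1-p_i)\Bigr)\cdot\prod_{i\in S}\frac{p_i}{1-p_i} \;=\; \Bigl(\prod_{i=1}^n(1-p_i)\Bigr)\cdot\det(\Dbt_{S,S}),
\end{align*}
using that $\Dbt=\D_p(\I-\D_p)^{-1}$ is diagonal with entries $p_i/(1-p_i)$, so its principal minor on $S$ is the corresponding product. This step needs $\I-\D_p$ invertible, which is part of the hypothesis.

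Second, I would apply the Sylvester (Weinstein--Aronszajn) determinant identity together with the invertibility of $\A$ to obtain
\begin{align*}
\det(\X_S^\top\X_S+\A) \;=\; \det(\A)\cdot\det\bigl(\I+\X_S\A^{-1}\X_S^\top\bigr).
\end{align*}
On the other side, since $\Dbt$ is diagonal we have $(\Dbt^{\sfrac12}\X)_{S,:}=\Dbt_{S,S}^{\sfrac12}\X_S$, and therefore
\begin{align*}
\L_{S,S} \;=\; \Dbt_{S,S}+\Dbt_{S,S}^{\sfrac12}\X_S\A^{-1}\X_S^\top\Dbt_{S,S}^{\sfrac12} \;=\; \Dbt_{S,S}^{\sfrac12}\bigl(\I+\X_S\A^{-1}\X_S^\top\bigr)\Dbt_{S,S}^{\sfrac12},
\end{align*}
so $\det(\L_{S,S})=\det(\Dbt_{S,S})\cdot\det(\I+\X_S\A^{-1}\X_S^\top)$.

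Combining the two identities gives
\begin{align*}
\det(\X_S^\top\X_S+\A)\cdot\prod_{i\in S}p_i\prod_{i\notin S}(1-p_i) \;=\; \frac{\det(\A)\prod_i(1-p_i)}{1}\cdot\det(\L_{S,S}),
\end{align*}
which is $\det(\L_{S,S})$ times a factor independent of $S$. Since $\DPPreg{p}(\X,\A)$ and $\DPPens(\L)$ are both probability measures on $2^{[n]}$ whose mass functions are proportional, they must coincide. I do not expect any real obstacle: the only things to be careful about are (i) that $\Dbt$ and $\A^{-1}$ exist, which is exactly what the invertibility hypotheses provide, and (ii) keeping track of the block/diagonal indexing when pulling $\Dbt^{\sfrac12}$ outside on the principal submatrix.
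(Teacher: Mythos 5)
Your proposal is correct and follows essentially the same route as the paper's proof: both rewrite the Bernoulli weights as $\det(\Dbt_{S,S})$ times an $S$-independent constant, factor out $\det(\A)$, and apply the Sylvester identity $\det(\I+\A^{-1}\X_S^\top\X_S)=\det(\I+\X_S\A^{-1}\X_S^\top)$ to match $\det(\L_{S,S})$. Your version is just slightly more explicit about the principal-submatrix bookkeeping; no gaps.
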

\begin{proof}
  Let $S\sim \DPPreg{p}(\X,\A)$. By Definition \ref{d:r-dpp} and
  the fact that $\det(\A\B+\I)=\det(\B\A+\I)$,
  \begin{align*}
    \Pr(S) &\propto \det(\X_S^\top\X_S+\A)\cdot
\prod_{i\in S}p_i\cdot\prod_{i\not\in S}(1-p_i) = \det(\X_S^\top\X_S+\A)\cdot
\prod_{i\in S}\frac{p_i}{1-p_i}\cdot\prod_{i=1}^n(1-p_i)
\\ &\propto\det\!\big(\A(\A^{-1}\X_S^\top\X_S+\I)\big) \det(\Dbt_{S,S})
     =\det(\A)\det(\A^{-1}\X_S^\top\X_S+\I) \det(\Dbt_{S,S})
\\ &\propto \det(\X_S \A^{-1}\X_S^\top+\I) \det(\Dbt_{S,S})
=\det\!\Big(\big[\Dbt^{\sfrac12}\X \A^{-1}\X^\top\Dbt^{\sfrac12}+\Dbt\big]_{S,S}\Big),
  \end{align*}
which matches the definition of the L-ensemble DPP.
\end{proof}
 At this point, to sample from  $\DPPreg{p}(\X,\A)$, we could simply
invoke any algorithm for sampling from
an ensemble DPP. However, this would only work for invertible
$\A$, which in particular excludes the important case of
$\A=\zero$ corresponding to classical experimental
design. Moreover, the standard algorithm would require computing the
eigendecomposition of the ensemble kernel, which (at
least if done na\"ively) costs $O(n^3)$. Even after this is done, the
sampling cost would still be $O(n\,|S|^2)$ which can be considerably
more than $O(nd^2)$. We first address the issue of invertibility of matrix
$\A$ by expressing our distribution via a correlation DPP.
\begin{lemma}\label{l:correlation}
  Given $\X$, $\A$, and $\D_p$ as in Theorem \ref{t:algorithm} (without
  any additional assumptions), we have
  \begin{align*}
    \DPPreg{p}(\X,\A)= \DPPcor\big(\D_p +
    (\I-\D_p)^{\sfrac12}\,\D_p^{\sfrac12}\X(\A+\X^\top\D_p\X)^{-1}\X^\top\D_p^{\sfrac12}(\I-\D_p)^{\sfrac12}\big).
    \end{align*}
  \end{lemma}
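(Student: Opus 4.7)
I would first establish the identity under the extra invertibility hypotheses of Lemma~\ref{t:reduction}, namely $\A$ and $\I-\D_p$ invertible, and then remove them by a density argument. Under those hypotheses, Lemma~\ref{t:reduction} already gives $\DPPreg{p}(\X,\A)=\DPPens(\L)$ with $\L=\Dbt+\Dbt^{\sfrac12}\X\A^{-1}\X^\top\Dbt^{\sfrac12}$, and the ensemble-to-correlation map $\K=\I-(\I+\L)^{-1}$ (valid for any ensemble kernel, since it does not require $\I-\K$ to be invertible) reduces the problem to a single Woodbury calculation in closed form.

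\textbf{The main computation.} Since $\D_p$ and $\Dbt$ are diagonal with $\Dbt=\D_p(\I-\D_p)^{-1}$, three elementary entrywise identities hold: $\I+\Dbt=(\I-\D_p)^{-1}$, $\Dbt(\I-\D_p)=\D_p$, and $(\I-\D_p)\Dbt^{\sfrac12}=(\I-\D_p)^{\sfrac12}\D_p^{\sfrac12}$. The first rewrites $\I+\L=(\I-\D_p)^{-1}+(\Dbt^{\sfrac12}\X)\A^{-1}(\X^\top\Dbt^{\sfrac12})$, so that Woodbury (with outer inverse $\I-\D_p$, middle matrix $\A$, and the second and third identities used to simplify $(\I-\D_p)\Dbt^{\sfrac12}$ and $\X^\top\Dbt^{\sfrac12}(\I-\D_p)\Dbt^{\sfrac12}\X=\X^\top\D_p\X$) gives
\begin{align*}
(\I+\L)^{-1}=(\I-\D_p)-(\I-\D_p)^{\sfrac12}\D_p^{\sfrac12}\X\bigl(\A+\X^\top\D_p\X\bigr)^{-1}\X^\top\D_p^{\sfrac12}(\I-\D_p)^{\sfrac12}.
\end{align*}
Subtracting from $\I$ recovers exactly the kernel claimed in the statement.

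\textbf{Removing the assumptions.} For each fixed $S\subseteq[n]$, the probability of $S$ under $\DPPreg{p}(\X,\A)$ (from Definition~\ref{d:r-dpp}) and its probability under $\DPPcor(\K)$ (with $\K$ the explicit right-hand-side kernel, expanded into principal minors via inclusion-exclusion) are both rational functions of the entries of $\A$ and $p$, well-defined on the open set where $\A$ is psd, $\A+\X^\top\D_p\X$ is full rank, and $p\in[0,1]^n$. The calculation above shows they agree on the dense sub-open-set where $\A$ is positive definite and every $p_i<1$, hence by rigidity of rational functions they agree on the whole parameter set. The one point I expect to need a bit of care---and the only real obstacle here---is verifying that the explicit $\K$ remains a valid correlation kernel (psd with spectrum in $[0,1]$) at the boundary where invertibility fails. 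This is automatic because $\K$ is a pointwise limit of correlation kernels from the invertible regime, and the set of such matrices is closed under limits.
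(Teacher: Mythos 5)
Your proposal is correct and follows essentially the same route as the paper's proof: start from Lemma~\ref{t:reduction} in the invertible regime, convert the ensemble kernel via $\K=\I-(\I+\L)^{-1}$, simplify with a Woodbury-type identity to reach the stated kernel, and then extend to degenerate $\A$ and $p$ by a continuity argument. Your rational-function rigidity step is just a repackaging of the paper's explicit limit with $p_\epsilon=(1-\epsilon)p$ and $\A_\epsilon=\A+\epsilon\I$, and your closing observation that the psd-with-spectrum-in-$[0,1]$ condition survives the limit is the right (and only) boundary check needed.
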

  When $\A$ and $\I-\D_p$ are invertible, then the proof (given in
  Appendix \ref{a:proofs}) is a straightforward calculation.
Then, we use a limit argument with $p_\epsilon=(1-\epsilon)p$ and
$\A_\epsilon=\A+\epsilon\I$, where $\epsilon\rightarrow 0$.

Finally, we show that the correlation DPP arrived at in Lemma
\ref{l:correlation} can be decomposed into a smaller DPP plus
Bernoulli sampling. In fact, in the following lemma we obtain a more
general recipe for combining DPPs with Bernoulli sampling, which may
be of independent interest. Note that if $b_i\sim\mathrm{Bernoulli}(p_i)$ are independent random
variables then $\{i:b_i\!=1\!\}\sim\DPPcor(\D_p)$.
\begin{lemma}\label{l:decomposition}
  Let $\K$ and $\D$ be $n\times n$ psd matrices with eigenvalues between
0 and 1, and assume that $\D$ is diagonal. If\, $T\!\sim\DPPcor(\K)$ and
$R\sim\DPPcor(\D)$, then
\begin{align*}T\cup R\sim \DPPcor\big(\D+(\I-\D)^{\sfrac12}\K
  (\I-\D)^{\sfrac12}\big).
  \end{align*}
\end{lemma}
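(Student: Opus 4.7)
The plan is to verify the defining property of a correlation DPP directly: show that for every $T' \subseteq [n]$,
\[
\Pr(T' \subseteq T \cup R) \;=\; \det\!\big(\M_{T',T'}\big), \qquad \M := \D + (\I-\D)^{\sfrac12}\K(\I-\D)^{\sfrac12}.
\]
Since $\D$ is diagonal with entries in $[0,1]$, the process $R\sim\DPPcor(\D)$ is just independent Bernoulli sampling with $\Pr(i\in R)=D_{ii}$, so I can condition on the set $A := R\cap T'$ and use independence of $T$ and $R$ to write
\[
\Pr(T'\subseteq T\cup R) \;=\; \sum_{A\subseteq T'}\Big(\prod_{i\in A}D_{ii}\prod_{i\in T'\setminus A}(1-D_{ii})\Big)\,\Pr(T'\setminus A\subseteq T).
\]
By the correlation-kernel property of $\K$, the last factor is $\det(\K_{T'\setminus A,\,T'\setminus A})$.

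The main algebraic step is to identify this sum with $\det(\M_{T',T'})$. Since $\D$ is diagonal, $(\I-\D)^{\sfrac12}$ is diagonal too, so the submatrix factors cleanly:
\[
\M_{T',T'} \;=\; \D_{T',T'} + (\I-\D)_{T',T'}^{\sfrac12}\K_{T',T'}(\I-\D)_{T',T'}^{\sfrac12}.
\]
I then invoke the standard multilinear expansion
\[
\det\!\big(\diag(\alpha) + \B\big) \;=\; \sum_{S\subseteq[m]}\Big(\prod_{i\notin S}\alpha_i\Big)\det(\B_{S,S}),
\]
with $\alpha_i = D_{ii}$ (for $i\in T'$) and $\B = (\I-\D)_{T',T'}^{\sfrac12}\K_{T',T'}(\I-\D)_{T',T'}^{\sfrac12}$. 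Because $(\I-\D)^{\sfrac12}$ is diagonal, $\det(\B_{S,S}) = \det(\K_{S,S})\prod_{i\in S}(1-D_{ii})$. Re-indexing via $A = T'\setminus S$ recovers exactly the sum above, completing the identification.

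Finally, I would note the easy sanity check that $\M$ is a valid correlation kernel: $\zero\preceq\M$ is clear from both summands being psd, and $\M\preceq \D + (\I-\D)^{\sfrac12}\I(\I-\D)^{\sfrac12} = \I$ since $\K\preceq\I$. The only real subtlety is the determinantal identity in the second paragraph; everything else is bookkeeping with the diagonal structure of $\D$ and the independence of $T$ and $R$. I anticipate no serious obstacle beyond remembering that the multilinearity expansion is precisely tailored to separate the Bernoulli weights from the $\K$-principal minors.
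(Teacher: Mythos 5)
Your proof is correct, and its skeleton is the same as the paper's: decompose the event $\{T'\subseteq T\cup R\}$ by conditioning on $A=R\cap T'$ (using that a diagonal correlation kernel is independent Bernoulli sampling), invoke independence of $T$ and $R$, and then match the resulting sum of weighted principal minors of $\K$ to $\det(\M_{T',T'})$. The one genuine difference is the determinantal identity you use to close the argument. The paper factors out $\det(\D_{A})$ and applies the L-ensemble normalization identity $\sum_{B\subseteq A}\det(\M_{B,B})=\det(\I+\M_{A,A})$ to the conjugated matrix $\D^{-\sfrac12}(\I-\D)^{\sfrac12}\K(\I-\D)^{\sfrac12}\D^{-\sfrac12}$; this requires $\D$ to have no zero diagonal entries, so the paper must append a limit argument $\D_\epsilon=\D+\epsilon\I$ to cover the degenerate case. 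You instead use the weighted multilinear expansion $\det(\diag(\alpha)+\B)=\sum_{S}\big(\prod_{i\notin S}\alpha_i\big)\det(\B_{S,S})$ directly, which is the same identity in its inhomogeneous form and tolerates $\alpha_i=0$ with no special handling. So your route buys a slightly cleaner, limit-free treatment of singular $\D$, at the cost of quoting a marginally less standard (though still classical) expansion. Your closing observation that $\zero\preceq\M\preceq\I$ is not strictly needed—verifying the containment probabilities against a genuine random set already certifies that $\M$ is a valid correlation kernel—but it is a harmless sanity check.
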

The lemma is proven in Appendix \ref{a:proofs}. Theorem
\ref{t:algorithm} now follows by combining Lemmas \ref{l:correlation} and \ref{l:decomposition}.

\section{Guarantees for Bayesian experimental design}
\label{s:guarantees}
In this section we prove our main results regarding Bayesian
experimental design (Theorems \ref{t:q1} and \ref{t:q2}). First, we
establish certain properties of the regularized DPP distribution that
make it effective in this setting. Even though the size of the
sampled subset $S\sim\DPPreg{p}(\X,\A)$ is random and can be as large
as $n$, it is also highly concentrated around its expectation, which
can be bounded in terms of the $\A$-effective dimension. This is
crucial, since both of our main results require a subset of
deterministically bounded size. Recall that the effective dimension is
defined as a function $d_{\A}(\Sigmab) =
\tr\big(\Sigmab(\A+\Sigmab)^{-1}\big)$. The omitted proofs are in
Appendix \ref{a:proofs}.
\begin{lemma}\label{l:size}
  Given any $\X\in\R^{n\times d}$, $p\in[0,1]^n$ and a psd matrix $\A$
s.t.~$\sum_ip_i\x_i\x_i^\top\!+\A$ is
full rank, let $S=T\cup \{i:b_i=1\}\sim \DPPreg{p}(\X,\A)$ be defined
as in Theorem \ref{t:algorithm}. Then
\begin{align*}
\E\big[|S|\big]\leq \E\big[|T|\big] + \E\Big[\sum_i b_i\Big] = d_{\A}\Big(\sum_ip_i\x_i\x_i^\top\Big) + \sum_i
  p_i.
\end{align*}
\end{lemma}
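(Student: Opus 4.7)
The plan is to exploit the decomposition $S = T \cup \{i : b_i = 1\}$ from Theorem \ref{t:algorithm}, which reduces the size bound to two separate calculations plus an elementary union-bound step.

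First I would establish the inequality $|S| \leq |T| + \sum_i b_i$ pointwise (since $|A \cup B| \leq |A| + |B|$), and take expectations to obtain $\E[|S|] \leq \E[|T|] + \E[\sum_i b_i]$. The Bernoulli term is immediate: by linearity, $\E[\sum_i b_i] = \sum_i p_i$.

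The main substantive step is computing $\E[|T|]$. Since $T$ is drawn from a correlation DPP with kernel
\begin{align*}
\K = \D_p^{\sfrac12}\X(\A+\X^\top\D_p\X)^{-1}\X^\top\D_p^{\sfrac12},
\end{align*}
the defining marginal property gives $\Pr(i \in T) = \K_{ii}$ for each $i$, so $\E[|T|] = \sum_i \K_{ii} = \tr(\K)$. Applying the cyclic property of the trace,
\begin{align*}
\tr(\K) = \tr\!\big(\X^\top\D_p\X\,(\A+\X^\top\D_p\X)^{-1}\big) = \tr\!\big(\Sigmab(\A+\Sigmab)^{-1}\big),
\end{align*}
where $\Sigmab = \X^\top\D_p\X = \sum_i p_i\x_i\x_i^\top$. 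By the definition of effective dimension, this equals $d_\A(\sum_i p_i\x_i\x_i^\top)$, completing the identification.

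There is no real obstacle here: the lemma is essentially a bookkeeping consequence of Theorem \ref{t:algorithm} together with the standard fact that the trace of a correlation kernel equals the expected sample size. The only care needed is to note that $\Sigmab + \A = \sum_i p_i \x_i \x_i^\top + \A$ is invertible by hypothesis, so the kernel $\K$ is well-defined and the identity $d_\A(\Sigmab) = \tr(\Sigmab(\A+\Sigmab)^{-1})$ may be applied directly.
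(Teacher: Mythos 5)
Your proof is correct. It takes a slightly different, and arguably more direct, route than the paper's: you work with the decomposition $S=T\cup\{i:b_i=1\}$ from Theorem \ref{t:algorithm}, apply the pointwise union bound $|T\cup R|\leq|T|+|R|$, and then compute the two expected sizes separately via $\E[|T|]=\tr(\K)$ and linearity for the Bernoulli part. The paper instead invokes Lemma \ref{l:correlation} to write $S$ itself as a single correlation DPP with kernel $\D_p+(\I-\D_p)^{\sfrac12}\K(\I-\D_p)^{\sfrac12}$, takes its trace exactly, and then uses the trace inequality $\tr\big((\I-\D_p)^{\sfrac12}\K(\I-\D_p)^{\sfrac12}\big)=\tr\big(\K(\I-\D_p)\big)\leq\tr(\K)$ to drop the $(\I-\D_p)$ factors. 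The two arguments are two views of the same slack: the $(\I-\D_p)$ attenuation in the combined kernel is precisely the kernel-level accounting of the overlap between $T$ and the Bernoulli set that your union bound discards. Both then finish with the same cyclic-trace identity $\tr\big(\D_p^{\sfrac12}\X(\A+\X^\top\D_p\X)^{-1}\X^\top\D_p^{\sfrac12}\big)=\tr\big(\X^\top\D_p\X(\A+\X^\top\D_p\X)^{-1}\big)=d_{\A}\big(\sum_ip_i\x_i\x_i^\top\big)$. Your version has the small advantage of matching the intermediate expression $\E[|T|]+\E[\sum_ib_i]$ that appears explicitly in the lemma statement, while the paper's version reuses Lemma \ref{l:correlation} and so gives the exact value of $\E[|S|]$ before bounding it.
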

Next, we show two expectation inequalities for the matrix inverse and
matrix determinant, which hold for the regularized DPP.
We use them to bound the Bayesian optimality criteria in expectation.
\begin{lemma}\label{t:expectations}
Whenever $S\sim \DPPreg{p}(\X,\A)$ is a well-defined distribution it holds that
\begin{align}
  \E\Big[\big(\X_S^\top\X_S+\A\big)^{-1}\Big]
  &\preceq \Big(
    \sum_ip_i\x_i\x_i^\top +\A \Big)^{-1},\label{eq:sqinv}
  \\
\E\Big[\det\!\big(\X_S^\top\X_S+\A\big)^{-1}\Big]
  &\leq\det\!\Big(
    \sum_ip_i\x_i\x_i^\top +\A \Big)^{-1}.\label{eq:det}
\end{align}
\end{lemma}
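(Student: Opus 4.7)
The plan is to reduce both inequalities to the determinantal expectation identity $(*)$ stated just below Definition~\ref{d:r-dpp}, which (applied with $\A$ replaced by an arbitrary psd matrix $\B$) says
\[
\sum_{S\subseteq[n]}\det\!\big(\X_S^\top\X_S+\B\big)\prod_{i\in S}p_i\prod_{i\notin S}(1-p_i) \;=\; \det\!\Big(\textstyle\sum_i p_i\x_i\x_i^\top+\B\Big).
\]
Writing $\M = \sum_i p_i\x_i\x_i^\top+\A$, the case $\B=\A$ is exactly the statement that $\det(\M)$ is the normalizing constant in Definition~\ref{d:r-dpp}.

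For the determinant bound~(\ref{eq:det}), the cancellation is immediate: whenever $\Pr(S)>0$,
\[
\Pr(S)\cdot\det(\X_S^\top\X_S+\A)^{-1} \;=\; \tfrac{1}{\det(\M)}\prod_{i\in S}p_i\prod_{i\notin S}(1-p_i),
\]
so summing over $S$ and using that the Bernoulli product measure sums to at most $1$ over those $S$ gives an expectation that is at most $1/\det(\M)$, with equality whenever every $\X_S^\top\X_S+\A$ is nonsingular (e.g., when $\A$ is full rank).

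For the inverse bound~(\ref{eq:sqinv}), I would test against an arbitrary vector $\v\in\R^d$ and use the matrix determinant lemma in the form $\v^\top M^{-1}\v = \det(M+\v\v^\top)/\det(M)-1$. Substituting $M=\X_S^\top\X_S+\A$ and multiplying by $\Pr(S)$ cancels the $\det(M)$ in the denominator, so term by term
\[
\Pr(S)\,\v^\top(\X_S^\top\X_S+\A)^{-1}\v \;\le\; \tfrac{\det(\X_S^\top\X_S+\A+\v\v^\top)}{\det(\M)}\prod_{i\in S}p_i\prod_{i\notin S}(1-p_i) \;-\; \Pr(S),
\]
where the first summand on the right is non-negative and well-defined for every $S$ (including those with $\Pr(S)=0$, which contribute nothing on the left). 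Summing over $S$ and invoking $(*)$ once with $\B=\A+\v\v^\top$ (still psd) and once with $\B=\A$ collapses the right-hand side to $\det(\M+\v\v^\top)/\det(\M)-1 = \v^\top\M^{-1}\v$, giving the bound for every $\v$ and hence the desired psd relation.

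The only real subtlety, and the reason the lemma is stated with $\preceq$ and $\le$ rather than equality, is precisely the handling of subsets on which $\X_S^\top\X_S+\A$ is singular: they receive zero probability under $\DPPreg{p}(\X,\A)$ but may still contribute positively to the right-hand side of $(*)$. Retaining those terms only loosens the bound, so no additional hypothesis on $\A$ is needed for the stated inequalities to hold.
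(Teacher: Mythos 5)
Your proof is correct. It rests on the same two pillars as the paper's: the determinantal expectation identity $(*)$, and the observation that the subsets $S$ for which $\X_S^\top\X_S+\A$ is singular receive zero probability but still carry nonnegative Bernoulli weight, so dropping or retaining them only loosens the bound in the right direction. Where you differ is in how the inverse bound is packaged. The paper works with the adjugate: it uses $\adj(\M)=\det(\M)\M^{-1}$ to cancel the determinant appearing in $\Pr(S)$ and then invokes the entrywise adjugate averaging identity $\E[\adj(\sum_i b_i\x_i\x_i^\top+\A)]=\adj(\sum_i p_i\x_i\x_i^\top+\A)$, which it imports from \cite{determinantal-averaging}. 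You instead polarize: testing against a vector $\v$ and using $\det(M+\v\v^\top)=\det(M)(1+\v^\top M^{-1}\v)$ turns the quadratic form into a ratio of determinants, after which two applications of the scalar identity $(*)$ --- once with offset $\A+\v\v^\top$ and once with $\A$ --- collapse the sum. Since $\v^\top\adj(M)\v=\det(M+\v\v^\top)-\det(M)$, your double use of $(*)$ is precisely the polarization of the adjugate identity, so the mathematical content is the same; what your route buys is self-containedness (only the scalar normalization identity is needed, applied to a shifted psd matrix, with no external lemma about matrix-valued averages), at the cost of a slightly longer chain of identities. The determinant bound is handled identically in both proofs.
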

\begin{corollary}
Let $f_{\A}$ be A/C/D/V-optimality. Whenever $S\sim\DPPreg{p}(\X,\A)$ is
well-defined,
\begin{align*}
  \E\big[f_{\A}(\X_S^\top\X_S)\big] \leq f_{\A}\Big(\sum_i p_i\x_i\x_i^\top\Big).
\end{align*}
\end{corollary}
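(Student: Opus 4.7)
The plan is to handle each of the four optimality criteria by reducing the expectation inequality to the two matrix inequalities (\ref{eq:sqinv}) and (\ref{eq:det}) supplied by Lemma \ref{t:expectations}. Write $\M_S = (\X_S^\top\X_S+\A)^{-1}$ and $\M^* = (\sum_i p_i\x_i\x_i^\top+\A)^{-1}$, so (\ref{eq:sqinv}) states $\E[\M_S]\preceq \M^*$ in the Loewner order.

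For A-, C-, and V-optimality, the key observation is that each of the criteria is a linear functional in the inverse matrix, and is furthermore \emph{monotone} with respect to the Loewner order, because trace, quadratic forms $\v \mapsto \v^\top(\cdot)\v$, and $\tr(\X(\cdot)\X^\top) = \sum_j \x_j^\top(\cdot)\x_j$ are all nonnegative linear functionals on the cone of psd matrices. So I would first exchange expectation with these linear functionals (Fubini/linearity of expectation), and then apply monotonicity to (\ref{eq:sqinv}). Concretely, for A-optimality,
\begin{align*}
\E\big[\tr(\M_S)\big] \;=\; \tr\!\big(\E[\M_S]\big) \;\leq\; \tr(\M^*),
\end{align*}
and the analogous one-line calculations cover the C and V cases.

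For D-optimality the criterion $\M_S \mapsto \det(\M_S)^{1/d}$ is not linear, so I cannot directly push the expectation inside. Instead I will combine (\ref{eq:det}) with Jensen's inequality: since $x\mapsto x^{1/d}$ is concave on $[0,\infty)$,
\begin{align*}
\E\big[\det(\M_S)^{1/d}\big] \;\leq\; \big(\E[\det(\M_S)]\big)^{1/d} \;\leq\; \det(\M^*)^{1/d},
\end{align*}
where the second inequality is precisely (\ref{eq:det}). Expanding $\det(\M_S)^{1/d} = \det(\X_S^\top\X_S+\A)^{-1/d}$ recovers $f_{\A}(\X_S^\top\X_S)$ for the D-criterion.

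There is no real obstacle here beyond checking the monotonicity and Jensen steps; all the hard work is in Lemma~\ref{t:expectations}, which produces the two Loewner/scalar inequalities that together cover all four criteria. The only point worth a sanity check is that the expectations are finite, which is ensured because $\DPPreg{p}(\X,\A)$ being well-defined forces $\sum_i p_i\x_i\x_i^\top+\A$ (and hence $\X_S^\top\X_S+\A$ on the support of $S$) to be positive definite.
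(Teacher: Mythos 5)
Your proposal is correct and follows essentially the same route as the paper: the A/C/V cases are handled by linearity and Loewner-monotonicity applied to inequality \eqref{eq:sqinv}, and the D case by combining \eqref{eq:det} with the concavity of $x\mapsto x^{1/d}$ (the paper phrases this as the power-mean inequality $\E[Z]\leq\E[Z^d]^{1/d}$, which is the same Jensen step). No gaps.
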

\begin{proof}
  In the case of A-, C-, and V-optimality, the function $f_{\A}$ is a
  linear transformation of the matrix $(\X_S^\top\X_S+\A)^{-1}$ so the
  bound follows from \eqref{eq:sqinv}. For D-optimality, we apply
  \eqref{eq:det} as follows:
  \begin{align*}
    \E\big[f_{\A}(\X_S^\top\X_S)\big]
    & = \E\Big[\det\!\big(\X_S^\top\X_S+\A\big)^{-\sfrac1d}\Big]
\leq
     \E\bigg[\Big(\det\!\big(\X_S^\top\X_S+\A\big)^{-\sfrac1d}\Big)^d\bigg]^{\sfrac1d}
    \\ &=  \E\Big[\!\det\!\big(\X_S^\top\X_S+\A\big)^{-1}\Big]^{\sfrac1d}=
         \det\!\Big(\sum_i p_i\x_i\x_i^\top\Big)^{-\sfrac1d},
  \end{align*}
  which completes the proof.
\end{proof}
Finally, we present the key lemma that puts everything together. This
result is essentially a generalization of Theorem \ref{t:q1} from
which also follows Theorem \ref{t:q2}.
\begin{lemma}\label{l:guarantees}
  Let $f_{\A}$ be A/C/D/V-optimality and $\X$ be $n\times
  d$.  For some $w=(w_1,\dots,w_n)\in[0,1]^n$, let
  $\Sigmab_w=\sum_iw_i\x_i\x_i^\top$ and assume that
  $\sum_i w_i=k\in[n]$. If $k\geq 4\,d_{\A}(\Sigmab_w)$, then a
subset $S\subseteq[n]$ of size $k$ can be found in $O(ndk+k^2d^2)$ time that satisfies
  \begin{align*}
    f_{\A}\big(\X_S^\top\X_S\big) \leq \bigg(\ 1+8\,\frac{d_{\A}(\Sigmab_w)}{k} + 8\sqrtshort{\frac{\ln (k/d_{\A}(\Sigmab_w))}{k}}\ \bigg)\cdot f_{\A}\big(\Sigmab_w\big).
  \end{align*}
  \end{lemma}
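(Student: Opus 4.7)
The plan is to sample from a rescaled regularized DPP, $S \sim \DPPreg{w'}(\X, \A)$ with $w' = (1-\alpha)w$ for $\alpha = \Theta(d/k + \sqrt{\ln(k/d)/k})$ and $d := d_{\A}(\Sigmab_w)$. The rescaling is calibrated so that (a) the expected size $\E[|S|] \leq d + (1-\alpha)k$ sits comfortably below $k$ with fluctuations of order $\sqrt{k\ln(k/d)}$, and (b) $f_{\A}((1-\alpha)\Sigmab_w)$ exceeds $f_{\A}(\Sigmab_w)$ by only a factor $(1-\alpha)^{-1}$. The desired size-$k$ subset is obtained by rejecting draws with $|S|>k$ and padding undersized draws with arbitrary indices, which only enlarges $\X_S^\top\X_S$ in PSD order and hence only decreases $f_{\A}$.

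\textbf{Key ingredients.} First I would invoke the corollary preceding the lemma to get $\E[f_{\A}(\X_S^\top\X_S)] \leq f_{\A}((1-\alpha)\Sigmab_w)$. Next, for each of A/C/D/V, I would verify the scaling bound $f_{\A}((1-\alpha)\Sigmab_w) \leq (1-\alpha)^{-1} f_{\A}(\Sigmab_w)$ via the identity $(1-\alpha)\bigl((1-\alpha)\Sigmab_w+\A\bigr)^{-1} = \bigl(\Sigmab_w + \A/(1-\alpha)\bigr)^{-1} \preceq (\Sigmab_w+\A)^{-1}$, together with $\det(cM) = c^d\det(M)$ for D-optimality. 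For size concentration, Lemma \ref{l:size} bounds $\E[|S|]$, and since $|T|$ is distributed as $\sum_i\mathrm{Bernoulli}(\lambda_i)$ for the eigenvalues of the correlation kernel in Theorem \ref{t:algorithm} and $|B|$ is a Bernoulli sum, a Chernoff--Bernstein tail bound on independent Bernoullis yields $\Pr(|S|>k) \leq d/k$ for the chosen $\alpha$. Combining everything, $\E[f_{\A}(\X_S^\top\X_S)\mid |S|\leq k] \leq (1-d/k)^{-1}(1-\alpha)^{-1} f_{\A}(\Sigmab_w)$, and expanding and absorbing cross terms gives the claimed $1+8d/k+8\sqrt{\ln(k/d)/k}$ envelope.

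\textbf{Algorithm and main obstacle.} The algorithmic implementation draws $O(1)$ independent samples from $\DPPreg{w'}$ using Theorem \ref{t:algorithm} ($O(nd^2)$ preprocessing plus $O(nk)$ per sample, totaling $O(ndk+k^2d^2)$ with the bound evaluation), rejects those with $|S|>k$, and returns the accepted one with smallest $f_{\A}$; the extra factor of roughly $2$ lost through a vanilla Markov step is easily absorbed into the constant $8$. The main difficulty is the quantitative calibration of $\alpha$: it has to be chosen large enough that Chernoff gives $\Pr(|S|>k)\leq d/k$, yet small enough that the rescaling inflation $(1-\alpha)^{-1}$, the conditioning inflation $(1-d/k)^{-1}$, and the Markov-type loss all fit within the stated envelope. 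The hypothesis $k \geq 4\,d_{\A}(\Sigmab_w)$ is exactly what ensures $\alpha \leq 1/2$ so these inflations behave like $1 + O(\alpha)$; verifying that the explicit constant $8$ works uniformly across A, C, D, and V --- especially D-optimality, where the determinant monotonicity step has an extra $1/d$ exponent --- is where the careful bookkeeping lies.
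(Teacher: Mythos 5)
Your overall strategy is the same as the paper's: sample from $\DPPreg{p}(\X,\A)$ with weights $p$ obtained by shrinking $w$ by a factor $1-\alpha$ with $\alpha=\Theta(d_w/k+\sqrt{\ln(k/d_w)/k})$ (the paper writes $p=w/(1+\epsilon)$, which is the same thing), bound $\E[f_{\A}(\X_S^\top\X_S)]$ via the corollary to Lemma~\ref{t:expectations} plus the scaling inequality $f_{\A}(c\,\Sigmab_w)\leq c^{-1}f_{\A}(\Sigmab_w)$, control $\E[|S|]$ via Lemma~\ref{l:size}, use the Poisson-binomial concentration of $|S|$ to get $\Pr(|S|>k)\leq d_w/k$, condition on $\{|S|\leq k\}$, and pad undersized draws. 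All of that is correct and matches the paper.

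The genuine gap is in your last step. You claim that taking $O(1)$ samples and losing ``a factor of roughly $2$'' through a vanilla Markov bound is ``easily absorbed into the constant $8$.'' It is not: the target guarantee is multiplicative of the form $1+8d_w/k+8\sqrt{\ln(k/d_w)/k}$, which tends to $1$ (e.g.\ it can be $1.1$ or smaller when $d_w/k$ is tiny and $k$ is large), so a multiplicative factor of $2$ destroys the bound entirely. To stay inside the envelope you must apply Markov at level $\delta=O(d_w/k)$, i.e.\
\[\Pr\Big(f_{\A}(\X_S^\top\X_S)\geq(1+\delta)\,\E\big[f_{\A}(\X_S^\top\X_S)\mid|S|\leq k\big]\ \Big|\ |S|\leq k\Big)\leq\frac{1}{1+\delta},\]
which only gives success probability $\Omega(\delta)=\Omega(d_w/k)$ per draw. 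Consequently the rejection sampler needs $O(k/d_w)$ draws in expectation, not $O(1)$, and this is exactly how the paper arrives at the $O(ndk+k^2d^2)$ runtime (the $O(nd^2)$ eigendecomposition is done once; each of the $O(k/d_w)$ draws costs $O(n d_w^2)$ for the DPP part plus $O(kd^2)$ to evaluate $f_{\A}$). Your runtime accounting based on $O(1)$ samples happens to land on the same big-$O$ expression, but the argument supporting it is wrong; with the corrected $\delta$ and repetition count the constant $8$ does close (the paper gets $1+7d_w/k+8\sqrt{\cdot}$ before the Markov loss and $1+8d_w/k+8\sqrt{\cdot}$ after). The rest of your bookkeeping, including the D-optimality case via $\det((1-\alpha)\Sigmab_w+\A)\geq(1-\alpha)^d\det(\Sigmab_w+\A)$, is fine.
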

\begin{proof}
  Let $p=(p_1,\dots,p_n)$ be defined so that $p_i =
  \frac{w_i}{1+\epsilon}$, and suppose that $S\sim
  \DPPreg{p}(\X,\A)$. Then, using Theorem \ref{t:expectations}, we have
  \begin{align*}
\Pr\big(|S|\leq k\big)\,\E\Big[f_{\A}(\X_S^\top\X_S)\mid |S|\leq k\Big]&\leq
\E\big[f_{\A}(\X_S^\top\X_S)\big]\leq
f_{\A}\Big(\sum_ip_i\x_i\x_i^\top\Big)
\\ &\leq (1+\epsilon)\cdot f_{\A}\Big(\sum_iw_i\x_i\x_i^\top\Big).
  \end{align*}
  Using Lemma \ref{l:size} we can bound the expected size of $S$ as
follows:
\begin{align*}
  \E\big[|S|\big]
  &\leq d_{\A}(\Sigmab_w) + \sum_i p_i
=d_{\A}(\Sigmab_w)+\frac{k}{1+\epsilon}=k\cdot \Big(1+\frac {d_{\A}(\Sigmab_w) }k - \frac{\epsilon}{1+\epsilon}\Big).
\end{align*}
Let $d_w = d_{\A}(\Sigmab_w)$ and $\alpha=1+\frac {d_w} k -\frac{\epsilon}{1+\epsilon}$.
If $1\geq\epsilon\geq \frac{4d_w}{k}$, then $\alpha\leq 1 +
\frac\epsilon4-\frac\epsilon2=1-\frac\epsilon4$. Since $\DPPreg{p}(\X,\A)$
is a  determinantal point process, $|S|$ is a
Poisson binomial r.v.~so for $\epsilon\geq 6\sqrt{\frac{\ln(k/d_w)}{k}}$,
\begin{align*}
  \Pr(|S|>k) \leq \ee^{-\frac{(k-\alpha k)^2}{2k}} = \ee^{-\frac
  k2(1-\alpha)^2}\leq \ee^{-\frac{k\epsilon^2}{32}}\leq \frac{d_w}k.
\end{align*}
For any $\epsilon\geq
4\,\frac{d_w}k + 6\sqrt{\frac{\ln(k/d_w)}{k}}$, we have
\begin{align*}
\E\big[f_{\A}(\X_S^\top\X_S)\mid |S|\leq k\big] &\leq
  \frac{1+\epsilon}{1-\frac{d_w}k}\cdot f_{\A}(\Sigmab_w)\leq
  \bigg(1+\frac{\epsilon+\frac{d_w}k}{1-\frac{d_w}k}\bigg)\cdot
f_{\A}(\Sigmab_w)
\\ &\leq \bigg(1+7\,\frac{d_w}{k} +
  8\sqrtshort{\frac{\ln(k/d_w)}{k}}\bigg)\cdot f_{\A}(\Sigmab_w).
\end{align*}
Denoting $\E\big[f_{\A}(\X_S^\top\X_S)\mid |S|\leq k\big]$ as $F_k$,
Markov's inequality implies that
\[\Pr\Big(f_{\A}(\X_S^\top\X_S) \geq
  (1+\delta) F_k\ \mid\ |S|\leq k\Big)\leq \frac1{1+\delta}.\]
Also, we showed
that $\Pr(|S|\leq k)\geq 1-\frac{d_w}{k}\geq \frac34$. Setting $\delta=\frac{d_w}{C
  k}$ for sufficiently large $C$ we obtain that with probability
$\Omega(\frac{d_w}{k})$, the random set $S$ has size at most $k$ and
\begin{align*}
f_{\A}(\X_S^\top\X_S)
&\leq \bigg(1+\frac{d_w}{C k}\bigg)\cdot \bigg(1+7\,\frac{d_w}{k} +
  8\sqrtshort{\frac{\ln(k/d_w)}{k}}\bigg)\cdot f_{\A}(\Sigmab_w)\\
&\leq
\bigg(1+8\,\frac{d_w}{k} +
  8\sqrtshort{\frac{\ln(k/d_w)}{k}}\bigg)\cdot f_{\A}(\Sigmab_w).
\end{align*}
We can sample from $\DPPreg{p}(\X,\A)$ conditioned on
$|S|\leq k$ and $f_{\A}(\X_S^\top\X_S)$ bounded as above by rejection
sampling. When $|S|<k$,  the set is completed to
$k$ with arbitrary indices. On average, $O(\frac k{d_w})$ samples from
$\DPPreg{p}(\X,\A)$ are needed, so the cost is $O(nd^2)$ for the
eigendecomposition, $O(\frac{k}{d_w}\cdot nd_w^2)=O(nd_wk)$ for
sampling and $O(\frac k{d_w}\cdot kd^2)$ for recomputing $f_{\A}(\X_S^\top\X_S)$.
\end{proof}
To prove the main results, we use Lemma \ref{l:guarantees} with
appropriately chosen weights $w$.
\begin{proofof}{Theorem}{\ref{t:q1}}
Let $w=(\frac kn,...,\frac kn)$ in Lemma \ref{l:guarantees}. Then, we have
$\Sigmab_w=\frac kn\Sigmab_{\X}$ and also $d_{\A}(\Sigmab_w) =
d_{\frac nk\!\A}$. Since for any set $S$ of size $k$, we
have $\opt\leq f_{\A}(\X_S^\top\X_S)$, the result follows.
\end{proofof}
\begin{proofof}{Theorem}{\ref{t:q2}}
  As discussed in \cite{near-optimal-design,boyd2004convex}, the following convex
relaxation of experimental design can be written as a semi-definite
program and solved using standard SDP solvers:
\begin{align}
w^* \ = \  \argmin_w\ \
  f_{\A}\Big(\sum_{i=1}^nw_i\x_i\x_i^\top\Big),\quad\text{subject to}\quad
  \forall_i\ \ 0\leq w_i\leq 1,\quad \sum_i w_i=k.\label{eq:sdp}
\end{align}
The solution $w^*$ satisfies $f_{\A}\big(\Sigmab_{w^*}\big)\leq
\opt$. If we use $w^*$ in Lemma \ref{l:guarantees}, then observing
that $d_{\A}(\Sigmab_{w^*})\leq d_{\A}$, and setting $k\geq C(\frac
{d_{\A}}{\epsilon} + \frac{\log1/\epsilon}{\epsilon^2})$ for
sufficiently large $C$, the algorithm in the lemma finds subset $S$
such that $f_{\A}(\X_S^\top\X_S)\leq (1+\epsilon)\cdot
f_{\A}(\Sigmab_w)\leq
(1+\epsilon)\cdot\opt$.
Note that we did not need to
solve the SDP exactly, so approximate solvers could be used instead.
\end{proofof}

\section{Experiments}\label{s:experiments}
We confirm our theoretical
results with experiments on real world data from \texttt{libsvm} datasets
\cite{libsvm} (more details in Appendix~\ref{a:experiments}).
For all our experiments, the prior precision matrix is set to $\A = n^{-1} \I$
and we consider sample sizes $k \in [d, 5d]$. Each experiment is
averaged over 25 trials and bootstrap 95\% confidence intervals are shown.
The quality of our method, as measured by the A-optimality
criterion $f_{\A}(\X_S^\top \X_S) = \tr \left((\X_S^\top \X_S + \A)^{-1}\right)$
is compared against the following references and recently proposed methods
for A-optimal design:
\begin{description}
    \item[Greedy bottom-up] adds an index $i \in [n]$ to the sample $S$
        maximizing the increase in A-optimality criterion
        \cite{greedy-supermodular,chamon2017approximate}.

    \item[Our method (with SDP)] uses the efficient algorithms
        developed in proving Theorem~\ref{t:q2} to sample
        $\DPPreg{p}(\X,\A)$ constrained to subset size $k$
        with $p = w^*$, see \eqref{eq:sdp},
        obtained using a         recently developed first order convex cone solver called Splitting
        Conical Solver (SCS) \cite{o2016conic}.
        We chose SCS because it can handle the SDP constraints in
        \eqref{eq:sdp} and has provable termination guarantees, while
        also finding solutions faster \cite{o2016conic} than alternative
        off-the-shelf optimization software libraries such as SDPT3 and Sedumi.

    \item[Our method (without SDP)] samples $\DPPreg{p}(\X,\A)$ with uniform
      probabilities $p \equiv \frac{k}{n}$.

    \item[Uniform] samples every size $k$ subset $S \subseteq [n]$
        with equal probability.

    \item[Predictive length] sampling \cite{zhu2015optimal} samples
        each row $\x_i$ of $\X$ with probability $\propto\|\x_i\|$.
\end{description}

\begin{figure}[htbp]
    \centering
    \hspace{-0.35cm}
    \includegraphics[width=\textwidth]{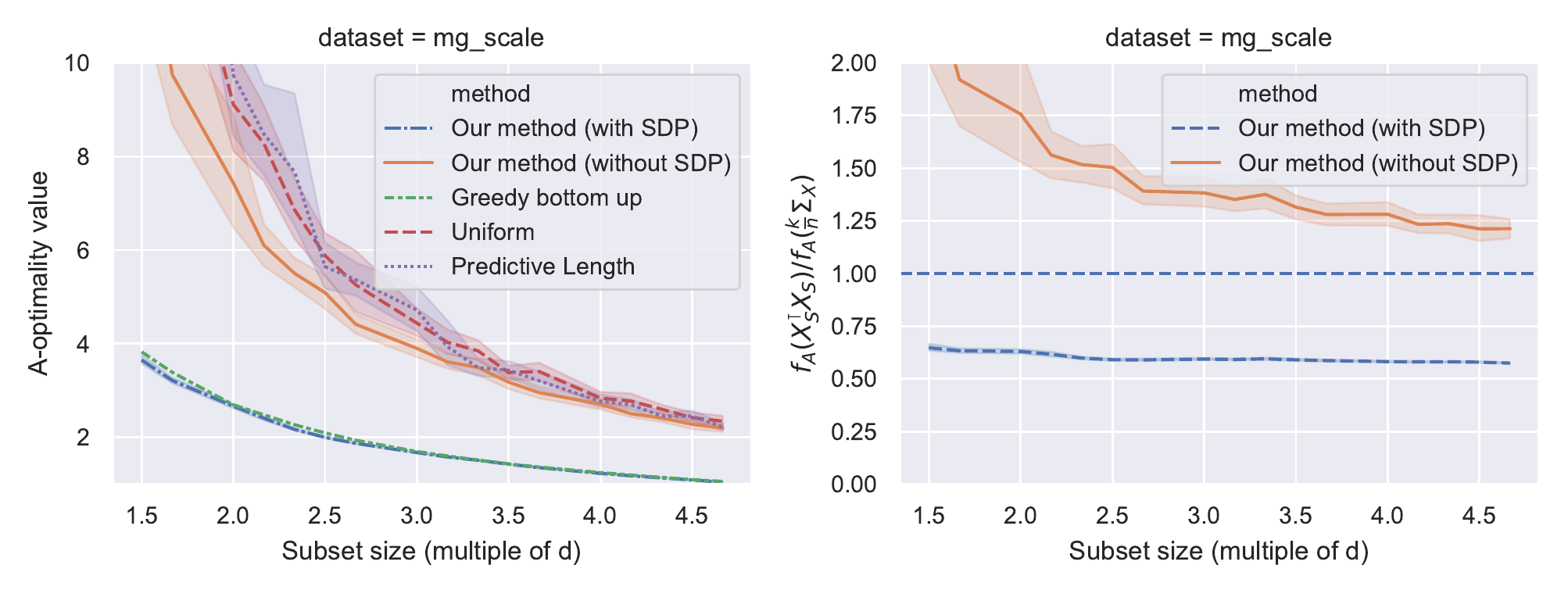}
    \caption{(left) A-optimality value obtained by the various methods on
        the \texttt{mg\_scale} dataset \cite{libsvm} with
        prior precision $\A = 10^{-5}\, \I$,\quad (right)
A-optimality value for our method (with and without SDP) divided by
$f_{\A}(\frac kn\Sigmab_{\X})$, the baseline estimate suggested by Theorem \ref{t:q1}.}
    \label{f:experiments}
\end{figure}

Figure~\ref{f:experiments} (left) reveals that our method (without SDP) is superior
to both uniform and predictive length sampling, producing designs which
achieve lower $A$-optimality criteria values for all sample sizes.
As Theorem~\ref{t:algorithm} shows that our method (without SDP) only differs
from uniform sampling by an additional DPP sample with controlled
expected size (see Lemma~\ref{l:size}), we may conclude
that adding even a small DPP sample can improve a uniformly sampled design.

Consistent with prior observations
\cite{chamon2017approximate,tractable-experimental-design}, the greedy bottom up
method achieves surprisingly good performance. However, if our method is used
in conjunction with an SDP solution, then we are able to match and
even slightly exceed the performance of the greedy bottom up
method. Furthermore, the overall run-time costs (see Appendix \ref{a:experiments})
between the two are comparable. As the majority of the runtime of our
method (with SDP) is occupied by solving the SDP, an interesting future direction
is to investigate alternative solvers such as interior point methods as well
as terminating the solvers early once an approximate solution is reached.

Figure~\ref{f:experiments} (right) displays the ratio
$f_{\A}(\X_S^\top \X_S)\, /\, f_{\A}(\frac{k}{n}\Sigmab_\X)$ for
subsets returned by our method (with and without SDP). Note that the
line for our method with SDP on
Figure~\ref{f:experiments} (right) shows that the ratio never goes
below 0.5, and we saw similar behavior across all examined datasets
(see Appendix~\ref{a:experiments}). This evidence suggests that for
    many real datasets $\opt$ is within a small constant factor of 
    $f_{\A}(\frac{k}{n}\Sigmab_\X)$, matching the upper bound of Theorem~\ref{t:q1}.

\subsubsection*{Acknowledgements}
MWM would like to acknowledge ARO, DARPA, NSF and ONR for providing partial
  support of this work. Also, MWM and MD thank the NSF for
  funding via the NSF TRIPODS program. The authors thank Uthaipon
  T.~Tantipongpipat for valuable discussions.

\bibliographystyle{alpha}
\bibliography{pap}

\newpage
\appendix

\section{Properties of regularized DPPs}
\label{a:proofs}
In this section we provide proofs omitted from Sections \ref{s:r-dpp}
and \ref{s:guarantees}. We start with showing the fact that the
regularized DPP distribution $\DPPreg{p}(\X,\A)$ is a correlation DPP.
\begin{lemma}[restated Lemma \ref{l:correlation}]
  Given $\X$, $\A$, and $\D_p$ as in Theorem \ref{t:algorithm}, we have
  \begin{align*}
    \DPPreg{p}(\X,\A)= \DPPcor\big(\D_p +
    (\I-\D_p)^{\sfrac12}\,\D_p^{\sfrac12}\X(\A+\X^\top\D_p\X)^{-1}\X^\top\D_p^{\sfrac12}(\I-\D_p)^{\sfrac12}\big).
    \end{align*}
  \end{lemma}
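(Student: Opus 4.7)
My plan is to follow the two-stage strategy that the authors outline: first verify the identity under extra invertibility assumptions via a direct Woodbury calculation, then remove those assumptions by a continuity/limit argument. The hypothesis $\sum_i p_i\x_i\x_i^\top + \A$ is full rank ensures everything in the target expression $\K$ is well-defined throughout.

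Stage 1 (invertible case). Assume temporarily that $\A$ and $\I-\D_p$ are invertible. Then Lemma~\ref{t:reduction} applies and gives $\DPPreg{p}(\X,\A)=\DPPens(\L)$ with $\L = \Dbt + \Dbt^{\sfrac12}\X\A^{-1}\X^\top\Dbt^{\sfrac12}$, where $\Dbt=\D_p(\I-\D_p)^{-1}$. I would then invoke the standard ensemble-to-correlation conversion $\K = \I-(\I+\L)^{-1}$. The first observation is the diagonal identity $\I+\Dbt = (\I-\D_p)^{-1}$, so
\[
\I+\L \;=\; (\I-\D_p)^{-1} + \Dbt^{\sfrac12}\X\A^{-1}\X^\top\Dbt^{\sfrac12}.
\]
Now apply the Woodbury identity with $A=(\I-\D_p)^{-1}$, $U=\Dbt^{\sfrac12}\X$, $C=\A^{-1}$, $V=\X^\top\Dbt^{\sfrac12}$. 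The simplification works cleanly because the diagonal factors commute: $(\I-\D_p)\Dbt^{\sfrac12} = \D_p^{\sfrac12}(\I-\D_p)^{\sfrac12}$ and, crucially, $\Dbt^{\sfrac12}(\I-\D_p)\Dbt^{\sfrac12} = \D_p$. These collapse the ``inner'' Woodbury term to $\A+\X^\top\D_p\X$ and the ``outer'' factors to $\D_p^{\sfrac12}(\I-\D_p)^{\sfrac12}\X$, producing exactly the kernel in the statement.

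Stage 2 (limit argument). For arbitrary $p\in[0,1]^n$ and psd $\A$ satisfying the lemma's hypothesis, define
\[
p_\epsilon \;=\; (1-\epsilon)p, \qquad \A_\epsilon \;=\; \A + \epsilon\I, \qquad \epsilon\in(0,1).
\]
Then $\D_{p_\epsilon}$ has entries in $[0,1-\epsilon)$, so $\I-\D_{p_\epsilon}$ is invertible, and $\A_\epsilon$ is strictly positive definite. Stage~1 thus yields $\DPPreg{p_\epsilon}(\X,\A_\epsilon)=\DPPcor(\K_\epsilon)$, where $\K_\epsilon$ is the formula in the statement with $(p,\A)$ replaced by $(p_\epsilon,\A_\epsilon)$. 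I would then take $\epsilon\to 0^+$. Since $2^{[n]}$ is finite, it suffices to verify pointwise continuity at $\epsilon=0$: the probability $\Pr_\epsilon(S)$ in Definition~\ref{d:r-dpp} is a rational function of $\epsilon$ whose denominator $\det(\sum_i p_{\epsilon,i}\x_i\x_i^\top + \A_\epsilon)$ tends to $\det(\sum_i p_i\x_i\x_i^\top + \A)>0$ by hypothesis, and $\K_\epsilon\to\K_0$ entrywise because $(\A_\epsilon+\X^\top\D_{p_\epsilon}\X)^{-1}$ converges to $(\A+\X^\top\D_p\X)^{-1}$. From $\Pr_\epsilon(T\subseteq S)=\det((\K_\epsilon)_{T,T})$ for each $T\subseteq[n]$ and each $\epsilon>0$, passing to the limit gives $\Pr_0(T\subseteq S)=\det((\K_0)_{T,T})$ for all $T$, which is the defining property of $\DPPcor(\K_0)$.

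The main obstacle is bookkeeping in Stage~1: one has to carry several products of diagonal and non-diagonal factors through the Woodbury expansion and notice the two cancellations that turn $\Dbt$-factors into $\D_p^{\sfrac12}(\I-\D_p)^{\sfrac12}$ on the outside and into $\D_p$ inside the inverse. Stage~2 is essentially mechanical once one confirms that the full-rank assumption prevents any denominators from vanishing at $\epsilon=0$, so no additional care is needed there.
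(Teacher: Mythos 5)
Your proposal is correct and follows essentially the same route as the paper's proof: the invertible case via Lemma \ref{t:reduction} and the ensemble-to-correlation conversion $\K=\I-(\I+\L)^{-1}$ (the paper factors out $(\I-\D_p)^{\sfrac12}$ and then applies a push-through identity, whereas you apply Woodbury in one step, but the computation is the same), followed by the identical limit argument with $p_\epsilon=(1-\epsilon)p$ and $\A_\epsilon=\A+\epsilon\I$. Your Stage 2 is in fact slightly more explicit than the paper's about why the full-rank hypothesis keeps the normalizing determinant bounded away from zero and why pointwise convergence of $\Pr_\epsilon(T\subseteq S)=\det((\K_\epsilon)_{T,T})$ suffices.
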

  \begin{proof}
    First, we show this under the invertibility assumptions of Lemma
    \ref{t:reduction}, i.e., given that $\A$ and $\I-\D_p$ are
    invertible. In this case $\DPPreg{p}(\X,\A)=\DPPens(\L)$, where
    \begin{align}\L=\Dbt+
    \Dbt^{\sfrac12}\X\A^{-1}\X^\top
      \Dbt^{\sfrac12}\quad\text{ and }\quad\Dbt=\D_p(\I-\D_p)^{-1}.\label{eq:ensemble}
\end{align}
Converting this to
    a correlation kernel $\K$ and denoting $\Xt=\D_p^{\sfrac12}\X$, we obtain
    \begin{align*}
      \K&=\I-(\I+\L)^{-1}
\\      &=\I - \big(\I + (\I-\D_p)^{-1}\D_p +
(\I-\D_p)^{-\sfrac12}\Xt\A^{-1}\Xt^\top(\I-\D_p)^{-\sfrac12}\big)^{-1}
\\ &= \I - \big((\I-\D_p)^{-1} +
     (\I-\D_p)^{-\sfrac12}\Xt\A^{-1}\Xt^\top(\I-\D_p)^{-\sfrac12}\big)^{-1}
\\ &=\I -
     (\I-\D_p)^{\sfrac12}(\I+\Xt\A^{-1}\Xt^\top)^{-1}(\I-\D_p)^{\sfrac12}
      \\
      &\overset{(*)}{=}\I -
     (\I-\D_p)^{\sfrac12}\big(\I-\Xt\A^{-\sfrac12}
     (\I+\A^{-\sfrac12}\Xt^\top\Xt\A^{-\sfrac12})^{-1}
     \A^{-\sfrac12}\Xt^\top\big)(\I-\D_p)^{\sfrac12}
      \\
      &=\I - (\I-\D_p) +
     (\I-\D_p)^{\sfrac12}\Xt(\A+\Xt^\top\Xt)^{-1}\Xt^\top(\I-\D_p)^{\sfrac12}
\\ &=\D_p + (\I-\D_p)^{\sfrac12}\Xt(\A+\Xt^\top\Xt)^{-1}\Xt^\top(\I-\D_p)^{\sfrac12},
    \end{align*}
    where $(*)$ follows from Fact 2.16.19 in
    \cite{matrix-mathematics}. Note that converting from $\L$ to $\K$
    got rid of the inverses $\A^{-1}$  and $(\I-\D_p)^{-1}$ appearing
    in \eqref{eq:ensemble}. The intuition 
    is that when $\A$ or $\I-\D_p$ is non-invertible, then
    $\DPPreg{p}(\X,\A)$ is not an L-ensemble but it is still a
    correlation DPP. To show this, we use a limit argument. For
    $\epsilon\in[0,1]$, let $p_\epsilon=(1-\epsilon)p$ and
    $\A_\epsilon=\A+\epsilon\I$. Observe that if $\epsilon>0$ then $\A_\epsilon$ and
    $\I-\D_{p_{\epsilon}}$ are always invertible even if $\A$ and
    $\I-\D_p$ are not. Denote $\K_\epsilon$ as the
    above correlation kernel with $p$ replaced by $p_{\epsilon}$ and
    $\A$ replaced by $\A_{\epsilon}$. Note that all matrix operations
    defining kernel $\K_\epsilon$ are continuous w.r.t. $\epsilon\in[0,1]$, including the inverse, since
    $\A+\Xt^\top\Xt$ is assumed to be invertible. Therefore, the
    following equalities hold (with limits taken point-wise and $\epsilon>0$):
    \begin{align*}
      \DPPreg{p}(\X,\A) = \lim_{\epsilon\rightarrow
      0}\DPPreg{p_\epsilon}(\X,\A_{\epsilon}) =
      \lim_{\epsilon\rightarrow 0}\DPPcor(\K_\epsilon) = \DPPcor(\K),
    \end{align*}
where we did not have to assume invertibility of $\A$ or $\I-\D_p$.
\end{proof}
We now prove a lemma about combining a determinantal point process
with Bernoulli sampling, which itself is a DPP with a diagonal correlation
kernel.
\begin{lemma}[restated Lemma \ref{l:decomposition}]
  Let $\K$ and $\D$ be $n\times n$ psd matrices with eigenvalues between
0 and 1, and assume that $\D$ is diagonal. If\, $T\!\sim\DPPcor(\K)$ and
$R\sim\DPPcor(\D)$, then
\begin{align*}T\cup R\sim \DPPcor\big(\D+(\I-\D)^{\sfrac12}\K
  (\I-\D)^{\sfrac12}\big).
  \end{align*}
\end{lemma}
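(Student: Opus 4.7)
The plan is to verify, for every $U \subseteq [n]$, the correlation-kernel identity
\[
\Pr(U \subseteq T \cup R) = \det(\M_{U,U}), \qquad \M := \D + (\I-\D)^{\sfrac12}\K(\I-\D)^{\sfrac12},
\]
which by the standard characterization of correlation DPPs establishes that $T \cup R \sim \DPPcor(\M)$. The left-hand side I would attack probabilistically by conditioning on $R$ (using the implicit independence of $T$ and $R$), and the right-hand side algebraically via multilinearity of the determinant.

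First, I would condition on $R$. Observe that $U \subseteq T \cup R$ is the same event as $U \setminus R \subseteq T$, so by independence and the correlation-kernel property of $T$,
\[
\Pr(U \subseteq T \cup R \mid R) \;=\; \Pr(U \setminus R \subseteq T) \;=\; \det(\K_{U\setminus R,\,U\setminus R}).
\]
Because $\D$ is diagonal, $R \sim \DPPcor(\D)$ is just independent Bernoulli sampling with $\Pr(i \in R) = d_{ii}$, so marginalizing over $R$ (and noting only $R \cap U$ matters) yields
\[
\Pr(U \subseteq T \cup R) \;=\; \sum_{V \subseteq U} \det(\K_{U\setminus V,\,U\setminus V}) \prod_{i \in V} d_{ii} \prod_{i \in U \setminus V}(1-d_{ii}).
\]

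Next I would expand the right-hand side. Since $(\I-\D)^{\sfrac12}$ is diagonal, restricting to rows and columns in $U$ commutes with the sandwich, giving
\[
\M_{U,U} \;=\; \D_U \;+\; (\I-\D_U)^{\sfrac12}\K_{U,U}(\I-\D_U)^{\sfrac12}.
\]
Writing each column of this matrix as the sum of the corresponding column of the diagonal term and of the sandwich term, and using multilinearity of the determinant together with the fact that any permutation contributing a factor from the diagonal at index $i$ must fix $i$, I obtain
\[
\det(\M_{U,U}) \;=\; \sum_{W \subseteq U} \Big(\prod_{i \in W} d_{ii}\Big)\, \det\!\big([(\I-\D_U)^{\sfrac12}\K_{U,U}(\I-\D_U)^{\sfrac12}]_{U\setminus W,\,U\setminus W}\big).
\]
The inner submatrix equals $(\I-\D_{U\setminus W})^{\sfrac12}\K_{U\setminus W,U\setminus W}(\I-\D_{U\setminus W})^{\sfrac12}$, whose determinant factors as $\prod_{i \in U\setminus W}(1-d_{ii}) \cdot \det(\K_{U\setminus W,U\setminus W})$. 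Reindexing with $V = W$ gives exactly the probabilistic expression above, completing the proof.

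The computation is essentially routine, so the only delicate point is the multilinearity expansion of $\det(\D_U + \tilde\K)$: one must check that picking the diagonal contribution in column $i$ forces the permutation to fix $i$, so that the non-diagonal contributions collapse to a principal minor of $\tilde\K$. No invertibility of $\D$ or $\I - \D$ is assumed, which is exactly why this decomposition lemma is strong enough to conclude Theorem~\ref{t:algorithm} by combining it with Lemma~\ref{l:correlation} (taking $\D = \D_p$ and $\K$ equal to the rank-$d$ correlation kernel appearing there).
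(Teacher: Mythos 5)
Your proof is correct, and on the probabilistic half it coincides exactly with the paper's: both condition on $R$ (independent Bernoulli sampling, since $\D$ is diagonal), use $\{U\subseteq T\cup R\}=\{U\setminus R\subseteq T\}$, and arrive at the same sum $\sum_{V\subseteq U}\prod_{i\in V}d_{ii}\prod_{i\in U\setminus V}(1-d_{ii})\det(\K_{U\setminus V,U\setminus V})$. Where you diverge is in evaluating that sum. The paper factors out $\det(\D_U)$ and invokes the L-ensemble partition-function identity $\sum_{B\subseteq A}\det(\L_{B,B})=\det(\I+\L_{A,A})$ applied to $\L=\D^{-\sfrac12}(\I-\D)^{\sfrac12}\K(\I-\D)^{\sfrac12}\D^{-\sfrac12}$; this requires $\D_U$ to be invertible, so the paper must append a limit argument $\D_\epsilon=\D+\epsilon\I$ to cover zero diagonal entries. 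You instead expand $\det(\D_U+\widetilde{\K})$ directly by multilinearity in the columns, using that a column of the diagonal summand has a single nonzero entry so the permutation must fix that index, which yields $\sum_{W\subseteq U}\prod_{i\in W}d_{ii}\det(\widetilde{\K}_{U\setminus W,U\setminus W})$ with no invertibility assumption. The two identities are of course equivalent when $\D_U\succ\zero$, but your version handles all cases uniformly and eliminates the limit argument, which is a genuine (if modest) simplification; the paper's version has the advantage of reusing a named identity already cited elsewhere in the text. Your delicate point --- that the diagonal column forces the permutation to fix the index, collapsing the remaining factor to a principal minor --- is exactly the right thing to check, and it holds.
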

\begin{proof}
For this proof we will use the shorthand $\K_A$ for $\K_{A,A}$. If
$\D$ has no zeros on the diagonal then $\det(\D_A)>0$ for all
$A\subseteq[n]$ and
  \begin{align*}
    \Pr(A \subset T \cup R)
    &=\sum_{B\subset A}\Pr(R\cap A=A\setminus B) \ \Pr(B\subseteq T)\\
    &=\sum_{B\subset A}\det(\D_{A\setminus
      B})\det\!\big([\I-\D]_B\big)\,\det(\K_B) \\
    &=\sum_{B\subset A}\det(\D_{A\setminus
      B})\det\!\Big(\big[(\I-\D)^{\sfrac12}\K(\I-\D)^{\sfrac12}\big]_B\Big) \\
    &=\det(\D_A) \sum_{B\subset A}\det\!\Big(\big[\D^{-\sfrac12}
      (\I-\D)^{\sfrac12}\K(\I-\D)^{\sfrac12}\D^{-\sfrac12}\big]_{B}\Big)\\[-1mm]
    &\overset{(*)}{=}\det(\D_A)\det\!\Big(\I + \big[\D^{-\sfrac12}
      (\I-\D)^{\sfrac12}\K(\I-\D)^{\sfrac12}\D^{-\sfrac12}\big]_A\Big)\\
    &=\det\!\Big(\big[\D+ (\I-\D)^{\sfrac12}\K(\I-\D)^{\sfrac12}\big]_A\Big),
  \end{align*}
  where $(*)$ follows from a standard determinantal identity used to
  compute the L-ensemble partition function
  \cite[Theorem~2.1]{dpp-ml}. If $\D$ has zeros on the diagonal, a
  similar limit argument as in Lemma \ref{l:correlation} with
  $\D_\epsilon=\D+\epsilon\,\I$ holds.
\end{proof}
Next, we give a bound on the expected size of a regularized DPP.
\begin{lemma}[restated Lemma \ref{l:size}]
  Given any $\X\in\R^{n\times d}$, $p\in[0,1]^n$ and a psd matrix $\A$
  s.t.~$\sum_ip_i\x_i\x_i^\top\!+\A$ is
  full rank, let $S=T\cup \{i:b_i=1\}\sim \DPPreg{p}(\X,\A)$ be defined
as in Theorem \ref{t:algorithm}. Then
\begin{align*}
\E\big[|S|\big]\leq \E\big[|T|\big] + \E\Big[\sum_i b_i\Big] =
  d_{\A}\Big(\sum_ip_i\x_i\x_i^\top\Big) + \sum_i  p_i.
\end{align*}
\end{lemma}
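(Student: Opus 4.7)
The plan is to establish the inequality and the equality separately. The inequality $\E[|S|] \leq \E[|T|] + \E[\sum_i b_i]$ is immediate from the union bound: for any sets we have $|T \cup \{i : b_i = 1\}| \leq |T| + |\{i : b_i = 1\}| = |T| + \sum_i b_i$, so taking expectations and using linearity gives the desired bound.

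For the equality, the Bernoulli term is trivial: $\E[\sum_i b_i] = \sum_i \E[b_i] = \sum_i p_i$ by linearity of expectation and the definition of the Bernoulli variables. The substantive step is computing $\E[|T|]$. By Theorem~\ref{t:algorithm}, $T \sim \DPPcor(\K)$ with correlation kernel
\[
\K = \D_p^{\sfrac12} \X (\A + \X^\top \D_p \X)^{-1} \X^\top \D_p^{\sfrac12}.
\]
A standard property of a correlation DPP is that $\E[|T|] = \tr(\K)$, which follows immediately from the defining marginal identity $\Pr(i \in T) = \K_{ii}$ and linearity of expectation applied to $|T| = \sum_i \one_{i \in T}$.

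From here the argument is a one-line trace manipulation. Using the cyclic property of the trace,
\[
\tr(\K) = \tr\!\big(\D_p^{\sfrac12} \X (\A + \X^\top \D_p \X)^{-1} \X^\top \D_p^{\sfrac12}\big) = \tr\!\big(\X^\top \D_p \X\,(\A + \X^\top \D_p \X)^{-1}\big),
\]
and since $\X^\top \D_p \X = \sum_i p_i \x_i \x_i^\top$, the right-hand side is exactly $d_{\A}(\sum_i p_i \x_i \x_i^\top)$ by the definition of $\A$-effective dimension. Combining the three pieces yields the claim.

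There is no real obstacle here: the only nontrivial ingredient is the marginal/trace identity for correlation DPPs, and everything else is linearity, cyclicity of trace, and the definition of $d_{\A}(\cdot)$. The lemma is essentially a consequence of Theorem~\ref{t:algorithm} plus the elementary fact that a correlation kernel's trace gives the expected subset size.
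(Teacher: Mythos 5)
Your proof is correct and matches the paper's in substance: both rest on the fact that the expected size of a correlation DPP equals the trace of its kernel, followed by the same cyclic-trace computation identifying $\tr\big(\X^\top\D_p\X(\A+\X^\top\D_p\X)^{-1}\big)$ with $d_{\A}\big(\sum_i p_i\x_i\x_i^\top\big)$. The only cosmetic difference is that you obtain the intermediate inequality from subadditivity of the cardinality of a union, using the decomposition of Theorem~\ref{t:algorithm}, whereas the paper takes the trace of the single combined kernel from Lemma~\ref{l:correlation} and bounds $\tr\big((\I-\D_p)^{\sfrac12}\M(\I-\D_p)^{\sfrac12}\big)\leq\tr(\M)$ directly; both routes yield the identical bound.
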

\begin{proof}
For correlation kernels it is known that the expected size of $\DPPcor(\K)$
is $\tr(\K)$. Thus, using $\D_p=\diag(p)$, we can invoke Lemma \ref{l:correlation} to obtain
\begin{align*}
  \E\big[|S|\big]
  &= \tr\big(\D_p +
  (\I-\D_p)^{\sfrac12}\,\D_p^{\sfrac12}\X(\A+\X^\top\D_p\X)^{-1}
                    \X^\top\D_p^{\sfrac12}(\I-\D_p)^{\sfrac12}\big)
  \\ &\leq \tr(\D_p) +\tr\big(\D_p^{\sfrac12}
       \X(\A+\X^\top\D_p\X)^{-1}\X^\top\D_p^{\sfrac12}\big)
\\ &=\tr(\D_p) +\tr\big(\X^\top\D_p\X (\A+\X^\top\D_p\X)^{-1}\big) =
     \tr(\D_p) + d_{\A}(\X^\top\D_p\X),
\end{align*}
from which the claim follows.
\end{proof}
Next, we show two expectation inequalities for the matrix inverse and
matrix determinant.
\begin{lemma}[restated Lemma \ref{t:expectations}]
Whenever $S\sim \DPPreg{p}(\X,\A)$ is a well-defined distribution it holds that
\begin{align}
  \E\Big[\big(\X_S^\top\X_S+\A\big)^{-1}\Big]
  &\preceq \Big(
    \sum_ip_i\x_i\x_i^\top +\A \Big)^{-1},\label{eq:sqinv2}
  \\
\E\Big[\det\!\big(\X_S^\top\X_S+\A\big)^{-1}\Big]
  &\leq\det\!\Big(
    \sum_ip_i\x_i\x_i^\top +\A \Big)^{-1}.\label{eq:det2}
\end{align}
\end{lemma}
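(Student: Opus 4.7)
The plan is to expand both expectations using the definition of $\DPPreg{p}(\X,\A)$ from \eqref{eq:poisson-prob} and exploit cancellations between the determinant in the probability density and the quantity being averaged. Throughout, I will write $\M_S = \X_S^\top\X_S+\A$ and $\M_p = \sum_i p_i \x_i\x_i^\top + \A$, and let $b_i \sim \mathrm{Bernoulli}(p_i)$ be independent. Note that the DPP assigns zero probability to any $S$ with $\det(\M_S)=0$, so $\M_S$ is invertible on its support.

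For \eqref{eq:det2}, I expand
\[
\E[\det(\M_S)^{-1}] \;=\; \sum_S \det(\M_S)^{-1}\cdot \frac{\det(\M_S)}{\det(\M_p)} \prod_{i\in S} p_i \prod_{i\notin S}(1-p_i),
\]
and the $\det(\M_S)$ factors cancel in every term with positive weight. What remains is $\det(\M_p)^{-1}$ multiplied by a partial sum of Bernoulli probabilities, which is $\leq \det(\M_p)^{-1}$; equality holds precisely when $\M_S$ is invertible for every $S$ (e.g.\ whenever $\A\succ\zero$).

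For \eqref{eq:sqinv2}, I use the identity $\M_S^{-1} = \mathrm{adj}(\M_S)/\det(\M_S)$, valid on the DPP's support. The same cancellation then gives
\[
\E[\M_S^{-1}] \;\preceq\; \frac{1}{\det(\M_p)}\,\E_B\!\Big[\mathrm{adj}\!\Big(\textstyle\sum_i b_i \x_i\x_i^\top + \A\Big)\Big],
\]
where the $\preceq$ arises from adding back the (PSD) contributions from sets with $\det(\M_S)=0$, using the fact that the adjugate of a symmetric PSD matrix is itself PSD (its eigenvalues are the products $\prod_{j\neq i}\lambda_j \geq 0$). To evaluate the expectation of the adjugate, I will test against an arbitrary $\v\in\R^d$: the matrix determinant lemma gives $\v^\top\mathrm{adj}(M)\v = \det(M+\v\v^\top)-\det(M)$, so
\[
\v^\top \E_B\!\Big[\mathrm{adj}\!\Big(\textstyle\sum_i b_i \x_i\x_i^\top + \A\Big)\Big]\v = \E_B\!\Big[\det\!\Big(\textstyle\sum_i b_i \x_i\x_i^\top + \A+\v\v^\top\Big)\Big] - \E_B\!\Big[\det\!\Big(\textstyle\sum_i b_i \x_i\x_i^\top + \A\Big)\Big].
\]
Now I apply the determinantal averaging identity $(*)$ (Lemma 7 of \cite{determinantal-averaging}, already invoked in the excerpt to show that $\DPPreg{p}$ sums to one) to each of the two expectations, once with precision matrix $\A+\v\v^\top$ and once with $\A$; subtracting and running the matrix determinant lemma in reverse gives exactly $\v^\top\mathrm{adj}(\M_p)\v$. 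Since $\v$ is arbitrary, $\E_B[\mathrm{adj}(\sum_i b_i \x_i\x_i^\top+\A)] = \mathrm{adj}(\M_p)$, and dividing by $\det(\M_p)$ delivers $\M_p^{-1}$ on the right.

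The main subtlety to watch is the possibility that $\M_S$ is singular for low-weight or $\A$-rank-deficient configurations. This is precisely what turns both identities into inequalities rather than equalities, and the argument goes through cleanly because the dropped terms are PSD (for \eqref{eq:sqinv2}) or nonnegative (for \eqref{eq:det2}); I do not expect any genuine technical obstacle beyond being careful about this convention.
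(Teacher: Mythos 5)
Your proof is correct and follows essentially the same route as the paper: cancel the determinant in the DPP weight against the quantity being averaged, pass to the adjugate, observe that the terms added for singular $\X_S^\top\X_S+\A$ are PSD, and conclude via the identity $\E_B[\adj(\sum_i b_i\x_i\x_i^\top+\A)]=\adj(\sum_ip_i\x_i\x_i^\top+\A)$. The only difference is that you derive this adjugate identity yourself by polarizing with $\det(M+\v\v^\top)-\det(M)=\v^\top\adj(M)\v$ and applying the determinantal averaging formula twice, whereas the paper simply cites it; your derivation is valid and makes the argument self-contained.
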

\begin{proof}
For a square matrix $\M$, define its adjugate, denoted $\adj(\M)$, as a matrix
whose $i,j$-th entry is $(-1)^{i+j}\det(\M_{-j,-i})$, where
$\M_{-j,-i}$ is the matrix $\M$ without $j$th row and $i$th column. If
$\M$ is invertible, then $\adj(\M) = \det(\M)\M^{-1}$. Now, let
$b_i\sim\mathrm{Bernoulli}(p_i)$ be independent random variables. As seen
in previous section, the identity
$\E[\det(\sum_ib_i\x_i\x_i^\top+\A)]=\det(\sum_ip_i\x_i\x_i^\top+\A)$ gives
us the normalization constant for $\DPPreg{p}(\X,\A)$. Moreover, as
noted in a different context by \cite{determinantal-averaging}, when
applied entrywise to the adjugate matrix, this identity implies that
$\E[\adj(\sum_ib_i\x_i\x_i^\top+\A)]=\adj(\sum_ip_i\x_i\x_i^\top+\A)$. Let
$\Ic$ denote the set of all subsets $S\subseteq [n]$ such that
$\X_S^\top\X_S+\A$ is invertible. We have
\begin{align*}
  \E\Big[\big(\X_S^\top\X_S+\A\big)^{-1}\Big]
  &= \sum_{S\in\Ic}\big(\X_S^\top\X_S+\A\big)^{-1}
\frac{\det(\X_S^\top\X_S+\A)}{\det(\sum_ip_i\x_i\x_i^\top+\A)}\ \prod_{i\in
    S}p_i\prod_{i\not\in S}(1-p_i)
\\ &=\sum_{S\in\Ic}\frac{\adj(\X_S^\top\X_S+\A\big)}
{\det(\sum_ip_i\x_i\x_i^\top+\A)}\ \prod_{i\in
     S}p_i\prod_{i\not\in S}(1-p_i)
\\ &\preceq \sum_{S\subseteq[n]}\frac{\adj(\X_S^\top\X_S+\A\big)}
{\det(\sum_ip_i\x_i\x_i^\top+\A)}\ \prod_{i\in
     S}p_i\prod_{i\not\in S}(1-p_i)
\\ &=\frac{\E\big[\!\adj(\sum_i
     b_i\x_i\x_i^\top+\A)\big]}{\det(\sum_ip_i\x_i\x_i^\top+\A)}
  \\
  &=\frac{\adj(\sum_ip_i\x_i\x_i^\top+\A)}{\det(\sum_ip_i\x_i\x_i^\top+\A)}
    = \Big(\sum_ip_i\x_i\x_i^\top+\A\Big)^{-1}.
\end{align*}
Note that if $\Ic$ contains all subsets of $[n]$, for example when
$\A\succ\zero$, then the inequality turns into equality. Thus, we
showed \eqref{eq:sqinv2}, and \eqref{eq:det2} follows even more easily:
\begin{align*}
  \E\Big[\!\det\!\big(\X_S^\top\X_S+\A\big)^{-1}\Big]
  &= \sum_{S\in\Ic}
\frac{1}{\det(\sum_ip_i\x_i\x_i^\top+\A)}\ \prod_{i\in
    S}p_i\prod_{i\not\in S}(1-p_i)
\leq \det\!\Big(\sum_i p_i\x_i\x_i^\top\Big)^{\!-1}\!,
\end{align*}
where the equality holds if $\Ic$ consists of all subsets of $[n]$.
\end{proof}

\section{Comparison of different effective dimensions}
\label{a:deff}
In this section we compare the two notions of effective dimension for
Bayesian experimental design considered in this work. Here, we let
$\X$ be the full $n\times d$ design matrix and use $k$ to denote the
desired subset size. Recall that the effective dimension is  defined
as a function of the data covariance matrix $\Sigmab_\X=\X^\top\X$ and
the prior precision matrix $\A$:
It is given by $d_{\A} =
\tr\big(\Sigmab_\X(\A+\Sigmab_\X)^{-1}\big)$. In
\cite{regularized-volume-sampling} it was suggested that $d_{\A}$
should also be used as the effective dimension for the experimental
design problem. Our results suggest it may not reflect the true degrees
of freedom of the problem because it does not scale with subset size
$k$. Instead we propose to use the \emph{scaled effective dimension}
$d_{\frac nk\!\A}$. Thus, the two definitions we are comparing can be
summarized as follows:
\begin{description}
  \item[Full effective dimension]\quad\ \ \,$d_{\A} =
    \tr\big(\Sigmab_\X(\A+\Sigmab_\X)^{-1}\big)$,
    \item[Scaled effective dimension] $d_{\frac nk\!\A} =
      \tr\big(\Sigmab_\X(\tfrac nk\A + \Sigmab_\X)^{-1}\big)$.
    \end{description}
    Here, we demonstrate that these two effective dimensions can be
    very different for some matrices and quite similar on others. For
    simplicity, we consider two diagonal data covariance matrices as
    our examples: \emph{identity covariance}, $\Sigmab_1 = \I$, and an
    \emph{approximately low-rank covariance}, $\Sigmab_2 =
    (1-\epsilon)\frac ds \I_S+\epsilon\I$, where $\I_S$ is the
    diagonal matrix with ones on the entries indexed by subset
    $S\subseteq [d]$ of
    size $s<d$ and zeros
    everywhere else. The second matrix is scaled in such way so that
    $\tr(\Sigmab_1)=\tr(\Sigmab_2)$. We use $d=100$, $s=10$ and
    $\epsilon=10^{-2}$. The prior precision matrix is
    $\A=10^{-2}\,\I$. Figure \ref{f:deff} plots the scaled effective
    dimension $d_{\frac nk\!\A}$ as a function of $k$, against the
    full effective dimension for both examples. Unsurprisingly, for
    the identity covariance the full effective dimension is almost
    $d$, and the scaled effective dimension goes up very quickly to
    match it. On the other hand, for the approximately low-rank
    covariance, $d_{\A}\approx 55$ is considerably less then
    $d=100$. Interestingly, the gap between the $d_{\frac nk\!\A}$ and 
    $d_{\A}$ for moderately small values of $k$ is even bigger. Our
    theory suggests that $d_{\frac nk\!\A}$ is a valid indicator of
    Bayesian degrees of freedom when $k\geq C\cdot d_{\frac nk\!\A}$
    for some small constant $C$ (Theorem \ref{t:q1} has $C=4$, but we
    believe this can be improved to $1$). While for the identity
    covariance the condition $k\geq
    d_{\frac nk\!\A}$ is almost equivalent to $k\geq d_{\A}$, in the
    approximately low-rank case, $k\geq d_{\frac nk\!\A}$ holds for
    $k$ as small as 20, much less than $d_{\A}$.
      \begin{figure}[htbp]
    \centering
    \includegraphics[width=0.5\textwidth]{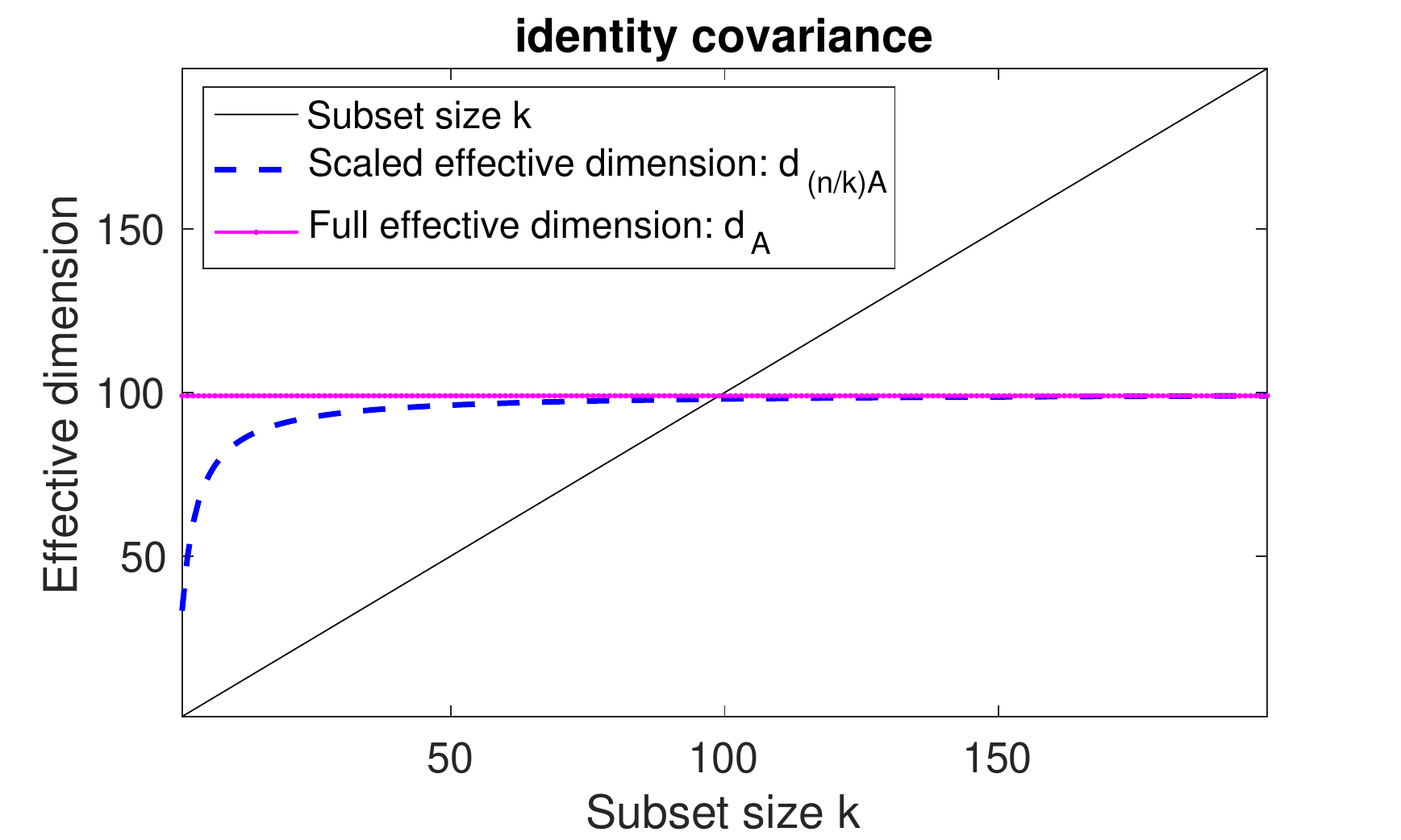}\nobreak\includegraphics[width=0.5\textwidth]{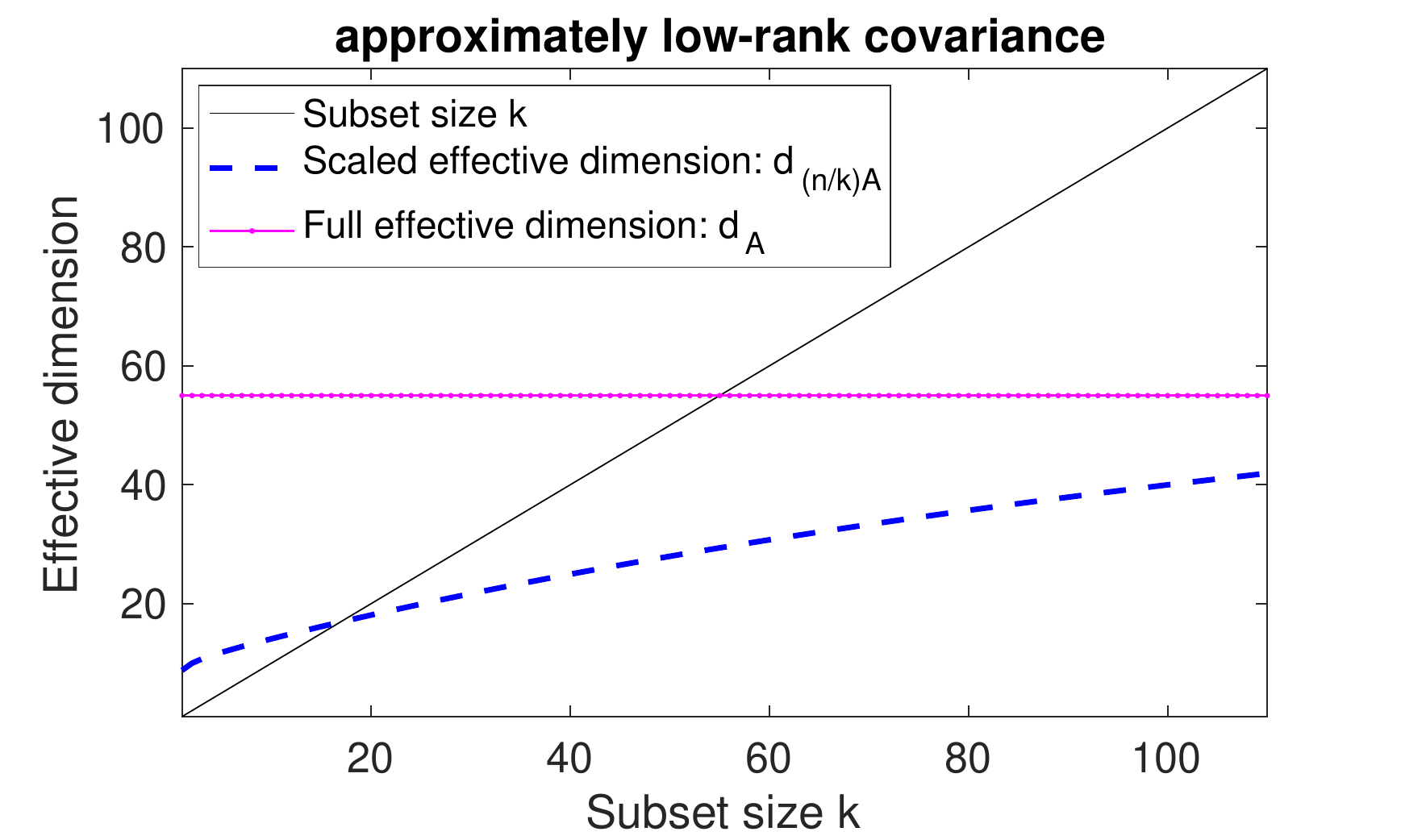}
    \caption{Scaled effective dimension compared to the full effective
      dimension for two diagonal data covariance matrices, with
      $\A=10^{-2}\,\I$.}
    \label{f:deff}
\end{figure}
\section{Additional details for the experiments}
\label{a:experiments}
This section presents additional details and experimental results omitted from
the main body of the paper. In addition to the \texttt{mg\_scale} dataset presented in
Section~\ref{s:experiments}, we also benchmarked on three other data sets
described in Table~\ref{tab:libsvm-datasets}.
\begin{table}[ht]
    \centering
    \caption{\cite{libsvm} datasets used in experiments}
    \label{tab:libsvm-datasets}
    \begin{tabular}[t]{lcccc}
        \toprule
        & \texttt{mg\_scale} & \texttt{bodyfat\_scale} & \texttt{mpg\_scale} & \texttt{housing\_scale} \\
        \midrule
        $n$ & 1385 & 252 & 392 & 506 \\
        $d$ & 6 & 14 & 7 & 13\\
        \bottomrule
    \end{tabular}
\end{table}

The A-optimality values obtained are illustrated in Figure~\ref{f:obj-grid}.
The general trend observed in Section~\ref{s:experiments} of our method
(without SDP) outperforming independent sampling methods (uniform and
predictive length) and our method (with SDP) matching the performance of the
greedy bottom up method continues to hold across the additional datasets considered.

\begin{figure}[htpb]
    \centering
    \includegraphics[width=\textwidth]{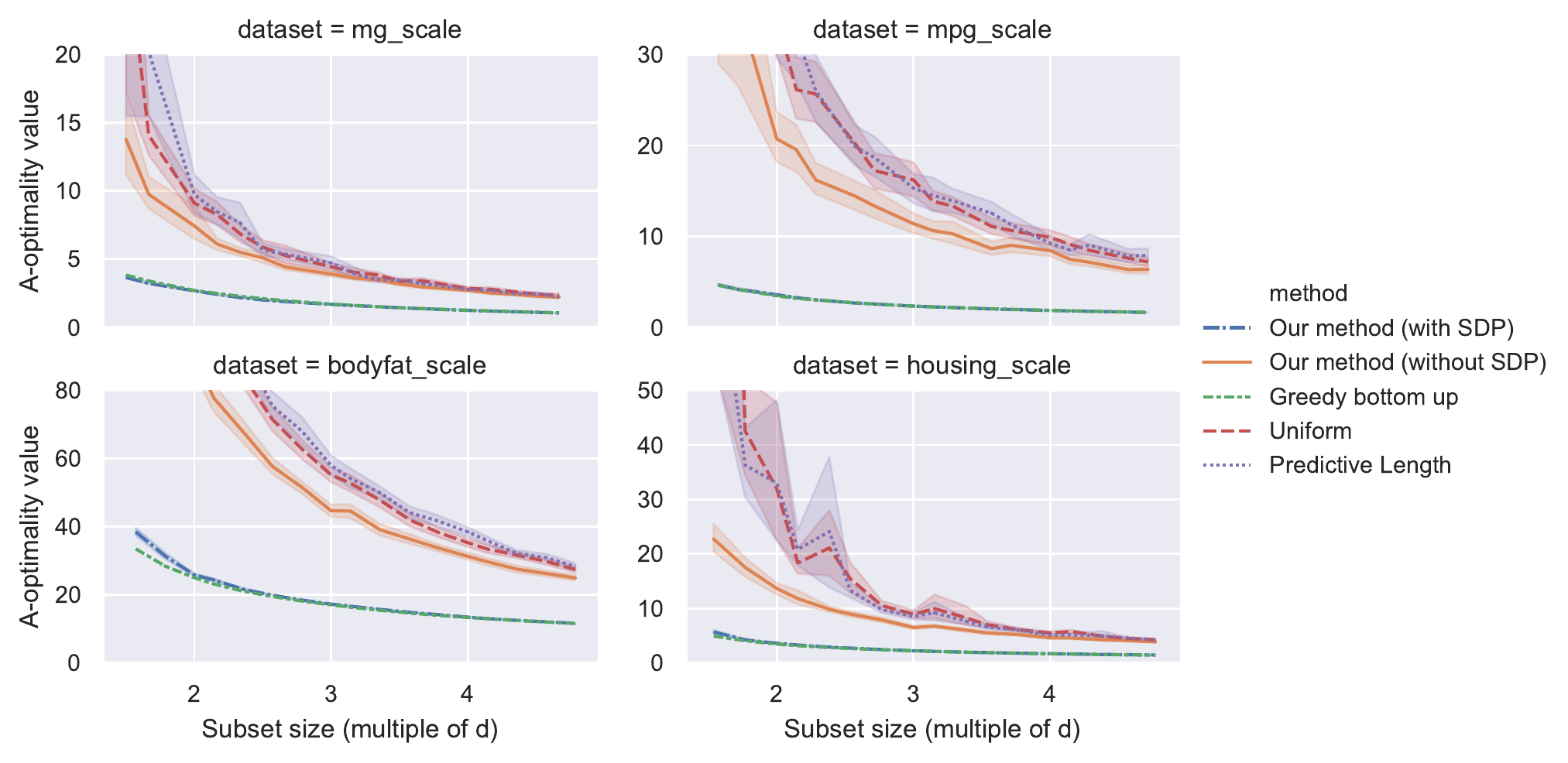}
    \caption{A-optimality values achieved by the methods compared. In all cases
        considered, we found our method (without SDP) to be superior to independent
        sampling methods like uniform and predictive length sampling. After paying the price
        to solve an SDP, our method (with SDP) is able to consistently match the performance
        of a greedy method which has been noted
        \cite{chamon2017approximate} to work well empirically.} 
    \label{f:obj-grid}
\end{figure}

The relative ranking and overall order of magnitude differences
between runtimes (Figure~\ref{f:runtimes}) are also similar across the various
datasets. An exception to the rule is on $\texttt{mg\_scale}$, where we see
that our method (without SDP) costs more than the greedy method
(whereas everywhere else it costs~less).

\begin{figure}[htpb]
    \centering
    \includegraphics[width=\textwidth]{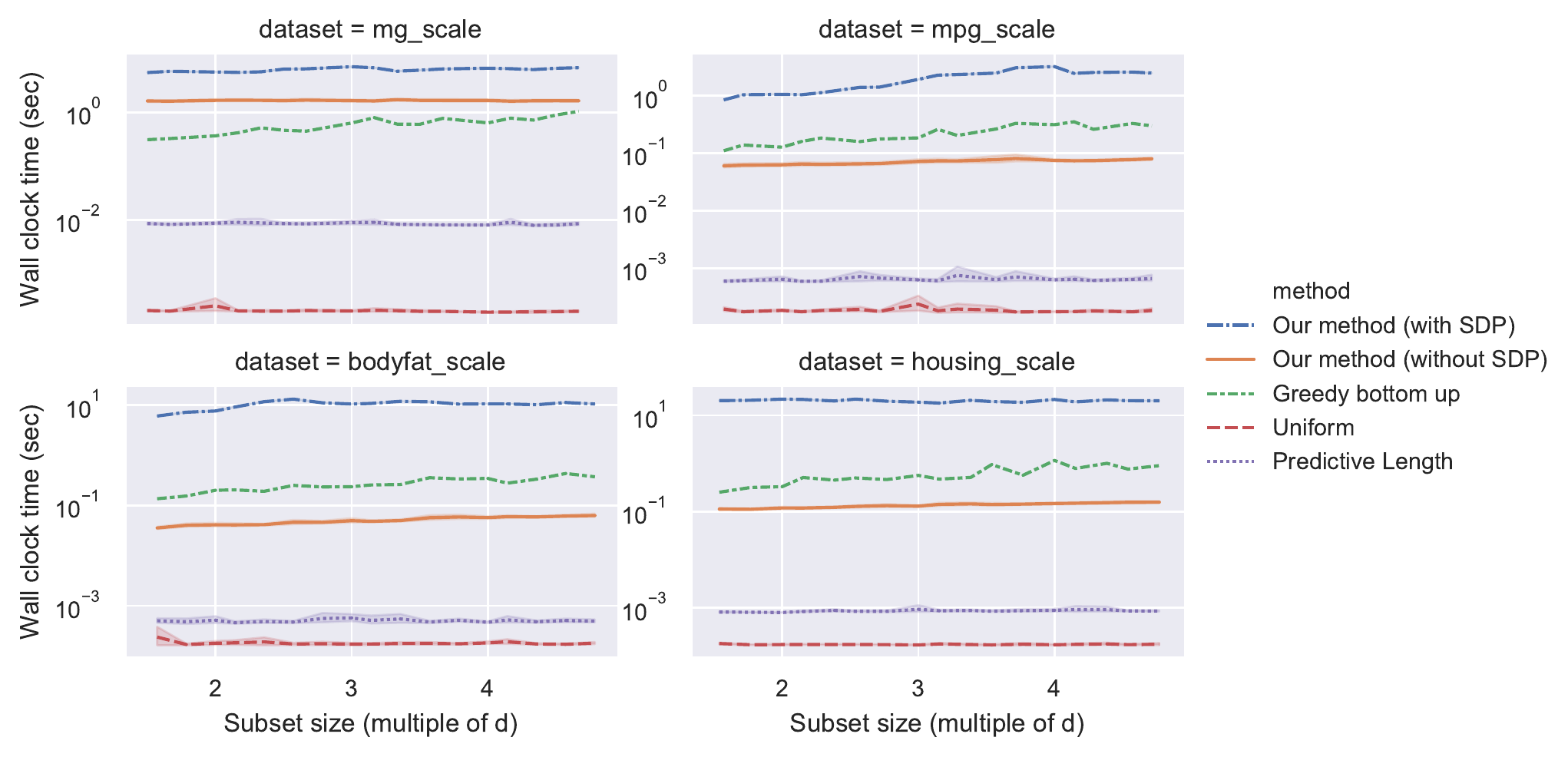}
    \caption{Runtimes of the methods compared. Our method (without SDP) is
        within an order of magnitude of greedy bottom up and faster in 3 out of
        4 cases. The gap between our method with and without SDP is
        attributable to the SDP solver, making investigation of more efficient
        solvers and approximate solutions an interesting direction for future
        work.
    }
    \label{f:runtimes}
\end{figure}

The claim that $f_{\A}(\frac{k}{n} \Sigmab_\X)$ is an appropriate
quantity to summarize the contribution of problem-dependent factors
on the performance of Bayesian A-optimal designs is further evidenced in Figure~\ref{f:ratios}.
Here, we see that after normalizing the A-optimality values by this
quantity, the remaining quantities are all on the same scale and close to $1$.

\begin{figure}[htpb]
    \centering
    \includegraphics[width=\textwidth]{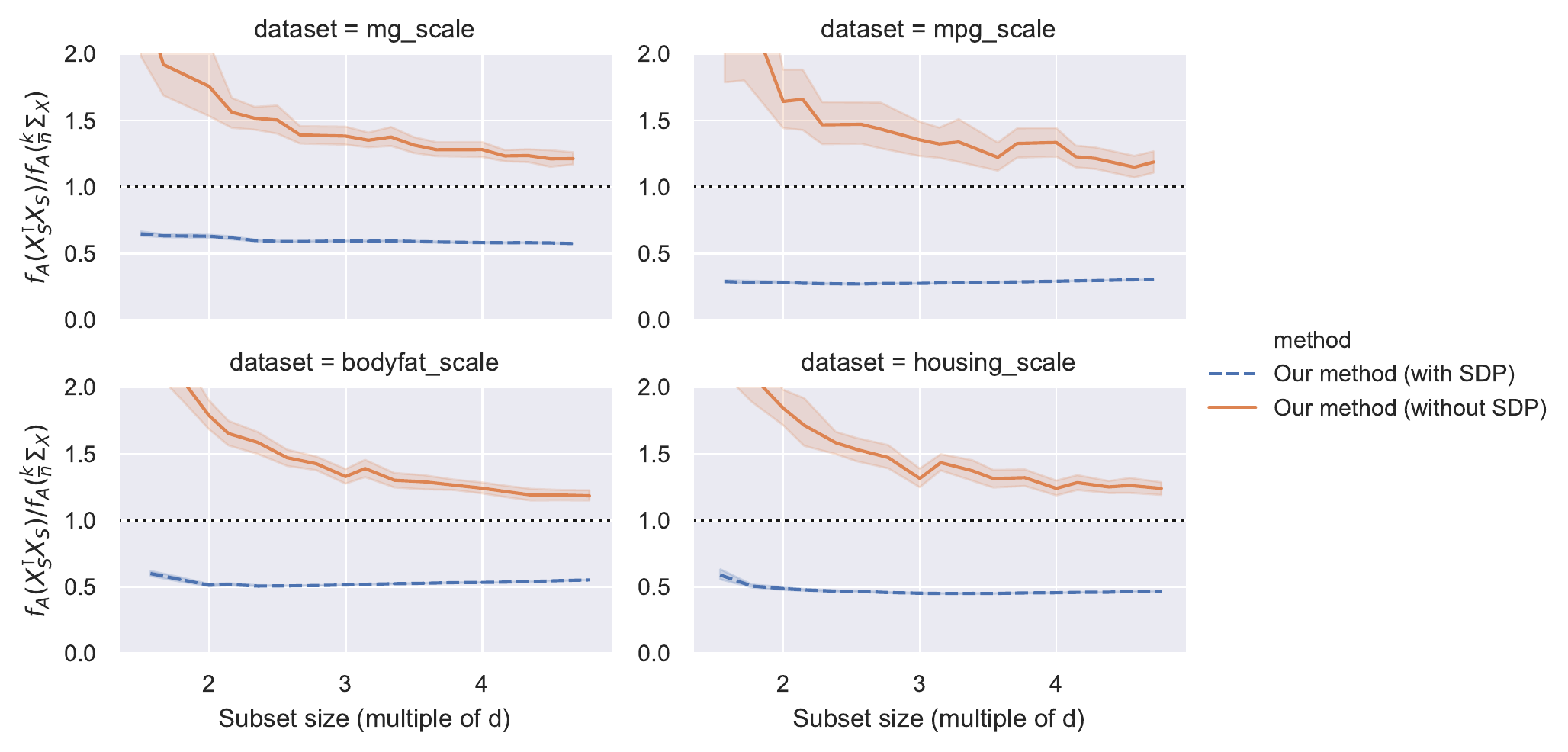}
    \caption{The ratio controlled by Lemma~\ref{l:guarantees}. This ratio converges
        to $1$ as $k \to n$ and is close to $1$ across all
        real world datasets,
        suggesting that $f_{\A}(\frac{k}{n} \Sigmab_\X)$
        is an appropriate problem-dependent scale for Bayesian A-optimal
        experimental design.
    }
    \label{f:ratios}
\end{figure}

\end{document}